\PassOptionsToPackage{vlined,linesnumbered}{algorithm2e}
\documentclass[11pt]{article}
\usepackage[margin=1in]{geometry}
\usepackage{amssymb,amsthm}
\usepackage{natbib}
\usepackage{graphicx,xcolor}
\usepackage[algo2e,ruled]{algorithm2e}
\usepackage{hyperref}
\newtheorem{theorem}{Theorem}
\newtheorem*{informaltheorem}{Theorem}
\newtheorem{lemma}[theorem]{Lemma}

\theoremstyle{definition}

\theoremstyle{remark}

\newenvironment{proofof}[1]{\begin{proof}[Proof of #1]}{\end{proof}}

\usepackage{mathtools}
\usepackage{microtype}
\usepackage{times}
\PrintSemicolon
\definecolor{algorithmcomment}{HTML}{483D8B}

\SetCommentSty{algorithmcommentstyle}
\hypersetup{colorlinks=true,citecolor=blue,linkcolor=blue,urlcolor=blue}

\counterwithin{theorem}{section}

\newcommand{\OPT}{\operatorname{OPT}}
\newcommand{\ALG}{\mathrm{ALG}}
\newcommand{\Reg}{\operatorname{Reg}}
\newcommand{\E}{\mathbb{E}}
\newcommand{\ind}{\mathbf{1}}

\title{Simultaneous Competitive and Regret Guarantees\\
for TCP Acknowledgment}
\author{Tsubasa Harada\\Institute of Science Tokyo
\and Yasushi Kawase\\Chuo University
\and Hanna Sumita\\Institute of Science Tokyo}
\date{}

\begin{document}

\maketitle

\begin{abstract}
We study simultaneous competitive and regret guarantees for TCP acknowledgment.
Each acknowledgment serves all pending packets, and the objective is to minimize the total acknowledgment and waiting costs.
It is well known that no deterministic online algorithm achieves a competitive ratio better than $2$.
Given $k$ randomized online experts and $0<\varepsilon<1$, we construct a randomized online algorithm whose realized total cost is always at most $(2+\varepsilon)$ times the offline optimal cost $\OPT$ and whose expected regret against the expert with the smallest realized cost is $O(\sqrt{\OPT\log(k+1)}+\log(1/\varepsilon))$.
The algorithm requires no advance knowledge of the number of packets or $\OPT$.
Our algorithm controls switching costs between experts and uses the gap between the permitted competitive bound and its current cost to determine when to learn.
We also prove lower bounds showing that, for $k\ge 2$, this regret bound is asymptotically optimal in $\OPT$, $k$, and $\varepsilon$, and that $\varepsilon>0$ is necessary for sublinear regret.
\end{abstract}


\section{Introduction}

In TCP acknowledgment, an online algorithm decides when to acknowledge arriving packets.
Each acknowledgment costs one and serves all packets that have arrived but have not yet been acknowledged.
Delaying an acknowledgment can reduce the number of acknowledgments, but increases the waiting cost of the packets.
The objective is to minimize the number of acknowledgments plus the total waiting cost.

TCP acknowledgment has been studied through competitive analysis, which compares an online algorithm with an optimal offline schedule on the same input.
\citet{dooly-goldman-scott} show that the greedy algorithm, which acknowledges whenever the waiting cost accumulated since its previous acknowledgment reaches one, is $2$-competitive, and that no deterministic online algorithm achieves a smaller competitive ratio.
A worst-case competitive guarantee alone, however, does not ensure that an algorithm performs nearly as well as a policy suited to the particular arrival pattern.
Different acknowledgment policies may perform well on different inputs.

This motivates an approach based on prediction with expert advice.
We are given a collection of online acknowledgment algorithms as experts and use their observed costs to decide which expert to follow.
In addition to the competitive ratio, we evaluate the resulting algorithm by its regret: the difference between its total cost and the smallest total cost incurred by an expert on the same input.
Writing $\OPT$ for the offline optimal cost, an expected regret bound of $o(\OPT)$ ensures that the algorithm's expected cost is at most $1+o(1)$ times the expected cost of the best expert in hindsight as $\OPT$ grows.

Our goal is to adapt to the experts without giving up the worst-case guarantee of competitive analysis.
Specifically, we ask whether an online algorithm can maintain a competitive ratio arbitrarily close to two for every realization of its random choices while achieving expected regret sublinear in the offline optimal cost.

\subsection{Our results}

We construct a randomized online algorithm whose total cost is at most $(2+\varepsilon)\OPT$ on every input and for every realization of the random choices, while achieving expected regret sublinear in $\OPT$ for fixed $k$ and $\varepsilon$.
Thus, the algorithm can learn from the given experts while keeping its total cost within $2+\varepsilon$ times the offline optimal cost.

\begin{informaltheorem}[Upper bound; see Theorem~\ref{thm:simultaneous-guarantees}]
For every $0<\varepsilon<1$ and every family of $k\ge1$ randomized online experts for TCP acknowledgment, there is a randomized online algorithm $\ALG$ such that
$C_{\ALG}\le(2+\varepsilon)\OPT$
for every input and every realization of the random choices, and
\begin{align*}
 \E\left[C_{\ALG}-\min_{i\in[k]}C_{A_i}\right]
 &\le O\left(\sqrt{\OPT\log(k+1)}+\log\frac1\varepsilon\right).
\end{align*}
The algorithm requires no prior knowledge of the number of packets or $\OPT$, and uses the polynomial number of arithmetic operations with respect to $k$ and the number of packets.
\end{informaltheorem}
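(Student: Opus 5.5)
We build $\ALG$ by combining two components: a lazy (low-switching) expert-learning algorithm, such as shrinking-dartboard or follow-the-lazy-leader, and the greedy $2$-competitive algorithm of \citet{dooly-goldman-scott} as a fallback. We first fix a continuous-time state description. At any moment, the relevant state of an algorithm is its set of pending (unacknowledged) packets. Switching from one expert to another can therefore be simulated by acknowledging immediately, at cost one, and then following the new expert's future acknowledgment times, with the new expert's pending set treated as served. Hence each switch costs at most one extra acknowledgment compared with following the new expert. This reduces the problem to an experts problem with switching cost $1$.

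Next we partition time into phases. A natural choice is the phases of the greedy algorithm, that is, intervals in which the accumulated waiting cost reaches one. Lower bounds on $\OPT$ come from the standard charging argument: every two consecutive greedy phases force $\OPT$ to pay at least $1$. Within this structure we run a lazy Hedge with an adaptive, doubling-trick learning rate $\eta_t\approx\sqrt{\log(k+1)/L_t}$, where $L_t$ is the current lower bound on $\OPT$. The expected number of switches is then $O(\sqrt{\OPT\log(k+1)})$, and the expected regret against the best expert is of the same order. These are the standard bounds for lazy Hedge with unknown horizon.

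The competitive guarantee for every realization is enforced by a budget rule. We maintain the running quantity $B_t=(2+\varepsilon)\,\mathrm{LB}_t-C_{\ALG}(t)$, where $\mathrm{LB}_t$ is an online-computable lower bound on the final $\OPT$. We follow the learner only while $B_t$ exceeds a safety margin, namely the maximum extra cost that one more phase of following an expert, plus a switch, could incur before we can react. Whenever $B_t$ falls below the margin, we switch to the greedy algorithm. Greedy accrues cost at most $2$ per unit of $\OPT$ increment, so it increases the slack $B_t$ at rate $\varepsilon$ per unit of $\OPT$. Once enough slack has accumulated, we resume learning. This mechanism guarantees $C_{\ALG}\le(2+\varepsilon)\OPT$ deterministically.

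The regret analysis then reduces to bounding the expected cost of the periods spent in greedy mode compared with the best expert. Slack accumulates at rate $\varepsilon$ relative to $\OPT$, and its initial level must be built from zero. This produces an additive $O(\log(1/\varepsilon))$ term, obtained by letting the allowed slack grow geometrically so that the number of forced greedy episodes during warm-up is logarithmic. Later fallbacks should happen only when the learner's cost substantially exceeds $2\OPT$. In that case the best expert also costs more than greedy locally, or else the regret already pays for it.

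The main obstacle is this last charging step: showing that the time spent in fallback mode costs, in expectation, only $O(\sqrt{\OPT\log(k+1)})$ beyond the best expert. The difficulty is that the best expert may itself be worse than greedy during those periods, while the learner loses its regret guarantee when it pauses. I would first try to treat fallback periods as rounds in which the learner is frozen but its losses keep being updated. I would then bound their total length via the slack argument, using the fact that exiting learning mode requires the learner to have exceeded $(2+\varepsilon)\mathrm{LB}$, which in turn forces large regret or a large $\OPT$.
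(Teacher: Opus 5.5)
Your reduction of learning to an experts problem with unit switching cost, via a synchronizing acknowledgment, matches the paper. The gap is the step you flag as the main obstacle, and as designed it cannot be closed: the mechanism gives linear regret. Falling back to the deterministic greedy algorithm whenever the slack hits a margin rebuilds slack only at rate $\varepsilon$ per unit of $\OPT$. On stretches where the best expert is cheap, however, greedy pays up to one extra unit per unit of $\OPT$.

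Concretely, take $k=1$ with the expert $A_{\mathrm{R}}(0)$, so your algorithm is deterministic, and alternate two kinds of segments. The first kind is runs of isolated packets spaced three apart: the expert pays $1$ per packet, greedy pays $2$, and $\OPT$ grows by $1$, so greedy adds exactly $\varepsilon$ to the slack. The second kind is bursts of $M$ packets at distinct times inside a window of length $1/M$: the expert pays $M$ while $\OPT$ grows by at most $2$. The oblivious adversary can simulate you and end each burst at the expert acknowledgment that pushes your slack below the margin, so on the burst you and the expert pay the same. You then stay in greedy mode until the slack climbs back above the margin, which takes on the order of $1/\varepsilon$ isolated packets, each adding one unit of regret. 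The adversary starts the next burst as soon as you resume learning. Each cycle therefore adds comparable amounts, of order $1/\varepsilon$, to $\OPT$ and to the regret, so the regret is $\Theta(\OPT)$.

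A single run of isolated packets likewise shows that your warm-up costs $\Theta(1/\varepsilon)$, not $O(\log(1/\varepsilon))$. Letting the target slack grow geometrically reduces the number of episodes but not their total cost, because each unit of slack bought with greedy costs about $1/\varepsilon$ units of regret. Two smaller points. First, $\mathrm{LB}_t$ must be the exact partial optimum $\OPT(t)$, used with fresh restarts and $\OPT(t)\ge\OPT(t_0)+\OPT^{t_0}(t)$; against a greedy-phase charging bound, greedy drains the slack instead of building it. Second, rounds must end whenever some expert acknowledges or accrues unit waiting cost, since a greedy phase does not bound the experts' per-round losses.

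What is missing is a way to decouple the fallback decision from the budget and to make the budget grow at a rate independent of $\varepsilon$. The paper runs blocks of $\OPT$-size $b=\Phi(t_0)/\theta$. Inside a block it switches between the learner and the randomized threshold algorithm $A_{\mathrm{R}}(Z)$, with $Z$ uniform on $\{2/3,1\}$ and expected ratio $11/6<2$. The switch is triggered by the cost difference $D(t)$ crossing a random offset $\xi$ uniform on $[0,b/8]$, not by the budget. This gives $O(1)$ expected switches and a realized block cost of at most $16b\le\Phi(t_0)/2$, so the budget never becomes negative. It also gives an expected block cost of at most $(2-1/36)b$ and regret $O(\sqrt{b\log(k+1)})$ against every expert. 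The resulting expected constant-factor growth of $\Phi$ per block, tracked through the potentials $\sqrt{\Phi}$ and $1/\Phi$, yields $\E[\sum_j\sqrt{b_j}]=O(\sqrt{\OPT})$ and only $O(1)$ expected calls to the budget-recovery routine. That routine attains $O(\log(1/\varepsilon))$ by choosing between the thresholds $1$ and $1-\beta_j/4$ with a fair coin. Then either another trial occurs with probability at most $1/2$, or with probability $1/2$ the accumulated gain $\beta_j$ grows by a factor of at least $5/4$ and does not decrease on the other branch. Plain greedy cannot achieve this.
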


The regret bound depends on $\varepsilon$ only through an additive $O(\log(1/\varepsilon))$ term.
Allowing the total cost to exceed $2\OPT$ is necessary for sublinear expected regret.
We show that any randomized online algorithm whose total cost is at most $2\OPT$ on every input and for every realization of the random choices must incur expected regret $\Omega(\OPT)$ on some inputs.
For $k\ge2$, we also establish lower bounds matching the dependence on $\OPT$, $k$, and $\varepsilon$ in the upper bound.

\begin{informaltheorem}[Lower bounds; see Section~\ref{sec:lower-bounds}]
For every $k\ge2$, there exists a family $\mathcal{E}_k$ of at most $k$ deterministic online experts with the following property.
Let $0<\varepsilon<1$, and let $\ALG$ be any randomized online algorithm whose total cost satisfies
$C_{\ALG}\le(2+\varepsilon)\OPT$
on every input and for every realization of the random choices.
Then, for every integer $h\ge1$, there is an input with $h\le\OPT\le3h/2$ on which
\begin{align*}
 \E[C_{\ALG}]-\min_{A\in\mathcal{E}_k}C_A
 &\ge
 \Omega\left(
 \min\left\{
 \OPT,
 \sqrt{\OPT\log k}+\log\frac1\varepsilon
 \right\}
 \right).
\end{align*}
Moreover, every randomized online algorithm whose total cost is at most $2\OPT$ on every input and for every realization of the random choices incurs expected regret $\Omega(\OPT)$ on inputs with arbitrarily large $\OPT$.
\end{informaltheorem}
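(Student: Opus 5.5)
The lower-bound theorem has two parts; I prove them with separate constructions. In all inputs the time-$t$ configuration can be chosen adaptively against the algorithm's distribution. Packets arrive at discrete points, and waiting cost accrues at unit rate per pending packet.

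\emph{Part 1: the $2\OPT$ lower bound.} I first isolate the rigidity that the constraint $C_{\ALG}\le2\OPT$ imposes. The standard adversary for deterministic algorithms releases a new packet immediately after each acknowledgment, and this already forces ratio $2$. Consequently, an algorithm whose cost stays below $2\OPT$ on every realization has essentially no slack on such inputs. It must acknowledge exactly when the accumulated waiting cost since its last acknowledgment reaches $1$, or at least within a window that vanishes as the input continues.

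I then take two experts: the greedy algorithm, and a ``lazy'' variant that is optimal on a second family of inputs. On that family greedy costs about $2\OPT$ while the lazy expert costs about $\OPT$. I interleave phases so that the algorithm cannot tell in time which family it is facing. Forced to behave like greedy, it incurs $\Omega(\OPT)$ extra cost in expectation relative to the lazy expert on a constant fraction of the phases. Letting the number of phases grow makes $\OPT$ arbitrarily large.

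\emph{Part 2, the $\sqrt{\OPT\log k}$ term.} I will use the classical experts lower bound. I build $h$ roughly independent blocks, each contributing $\OPT$ cost $1$ or $3/2$; this gives the range $h\le\OPT\le3h/2$. In each block one of two ``shapes'' is drawn uniformly at random. The shape is either a single packet, or a packet followed by a second packet after a delay. The two shapes are handled optimally by different acknowledgment timings, and an online algorithm committing to one timing pays a constant extra cost with probability $1/2$.

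Next I choose $k$ deterministic experts. Each expert follows a fixed binary code across the blocks, realized by a timing rule keyed to the block index. Against random shapes, every online algorithm has expected cost $h\cdot c$. The best of the $k$ codes beats this by $\Omega(\sqrt{h\log k})$, by the anti-concentration (maximum of $k$ random walks) argument of Cesa-Bianchi and Lugosi. Since the experts are deterministic, a random input yields, via Yao's principle, a fixed input with the same bound. Where $h<\log k$, I cap the bound by $\min\{\OPT,\cdot\}$.

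\emph{Part 2, the $\log(1/\varepsilon)$ term.} I exploit the fact that the algorithm must stay within $(2+\varepsilon)\OPT$ pathwise from the start, when $\OPT$ is small. I start with the greedy adversary for about $\log(1/\varepsilon)$ rounds at geometrically shrinking scales. At these scales the budget $\varepsilon\OPT$ is too small to let the algorithm deviate from greedy until the scale reaches $1/\varepsilon$. Meanwhile a second expert is optimal and gains a constant on each round. Appending this prefix to the previous construction, the two bounds add up, giving the sum inside the minimum.

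\emph{Main obstacle.} The hardest step is making the $\sqrt{\OPT\log k}$ construction compatible with the pathwise $(2+\varepsilon)$ constraint. The algorithm may be forced into conservative actions that either help or hurt it, and I must ensure this constraint never gives it information or cost advantage breaking the symmetric randomization. I would keep each block self-contained, restoring an empty pending state at every block boundary, so that the per-block analysis is exact.
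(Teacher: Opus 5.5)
\textbf{There is a genuine gap in the $\varepsilon$-dependence and in the $2\OPT$ claim.} Both rest on your ``rigidity'' assertion, that a pathwise $2$- or $(2+\varepsilon)$-competitive algorithm must act like greedy. You never prove it, and it is the actual technical core of the statement.

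The mechanism you invoke also fails for randomized algorithms. The adversary that releases a packet right after each acknowledgment depends on the algorithm's \emph{realized} acknowledgment times. So it does not give a fixed input on which $\E[C_{\ALG}]$ can be compared with the experts, and being adaptive to the algorithm's distribution does not cover this.

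The comparator is misidentified as well. On those inputs the optimal expert acknowledges immediately, not lazily. Moreover, interleaving a second family on which the forced behavior is not tight at $2\OPT$ hands the algorithm slack, after which nothing forces greedy-like behavior. For example, a burst arriving just after greedy's deadline can be served at once, so the algorithm finishes that phase below $2\OPT$.

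The missing idea is the paper's, in two steps.
\begin{itemize}
\item Use the fixed input $\sigma_n$ of single packets at $a_i=3(i-1)^2$, on which $A_{\mathrm{R}}(0)$ is optimal with $\OPT=n$. The prefix guarantee $(2+\varepsilon)i<3i\le a_{i+1}-a_i$ forces each packet to be acknowledged alone, so the regret is exactly $\sum_i\Delta_i$.
\item Fix one realization, so $\ALG$ is deterministic yet competitive on every input. Let $j$ be the first index with $\Delta_j<1-\varepsilon(2^{j+1}-1)$. In a hypothetical continuation, append bursts of $M_\ell$ simultaneous packets just after each of $\ALG$'s acknowledgments. Compare with the offline schedule that holds packet $j$ and all bursts until the latest one. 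Then $\ALG$'s remaining allowance stays at most $1+c$ with $c<1/(1+\varepsilon)$, yet drops by a fixed positive amount per burst, which is a contradiction.
\end{itemize}
Hence $\Delta_i\ge1-\varepsilon(2^{i+1}-1)$ for every realization. This gives regret at least $\frac12\min\{n,\lfloor\log_2(1/(4\varepsilon))\rfloor\}$, and $\Delta_i\ge1$, i.e., regret $n=\OPT$, when $\varepsilon=0$. The doubling of the admissible deviation is what produces $\log(1/\varepsilon)$; your ``geometrically shrinking scales'' heuristic does not capture it.

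\textbf{Your $\sqrt{\OPT\log k}$ construction is essentially the paper's,} with blocks $[t_j,t_j+3)$, an optional second packet at $t_j+1/2$, and expected cost at least $3h/2$ for any online algorithm. Two details remain open, however. You must choose the codes so that one family, fixed before $h$, works for every $h$. A ``maximum of $k$ random walks'' argument also needs controlled correlations between codes and a non-asymptotic anti-concentration bound.

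The paper avoids both issues. It splits blocks into $g=\lfloor\log_2k\rfloor$ residue classes and takes all $2^g$ class-wise policy combinations. The best expert then picks the cheaper policy per class, and the gap is $\frac14\sum_i\E_Y[|X_i|]\ge\frac14\sum_i\sqrt{h_i/3}$ by a fourth-moment bound.

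Finally, the obstacle you single out is not one. This bound holds for every online algorithm, so the competitive constraint plays no role there. The paper combines the two terms by taking the worse of the two inputs, using $\max\{\min(h,a),\min(h,b)\}\ge\frac12\min(h,a+b)$, rather than by concatenation.
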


\subsection{Overview of techniques}

\paragraph{Learning with switching costs.}
We first divide continuous time into rounds in which each expert incurs bounded cost.
Switching experts requires some care because the learner and the newly selected expert may have different unacknowledged packets.
Simply copying the expert's acknowledgment times may therefore cause the learner to incur more waiting cost than the expert.
To avoid this, the learner sends one additional acknowledgment when necessary at a switch.
Its unacknowledged packets then form a subset of those of the selected expert, and this inclusion is preserved while the learner follows that expert.
Consequently, the learner's cost in each round is at most the selected expert's cost, apart from the possible additional acknowledgment at the start.
This reduces the learning problem to prediction with expert advice with switching cost at most one.
We use the multiplicative-weights method of \citet{geulen-voecking-winkler}, which correlates successive expert choices to control the number of switches.

\paragraph{Maintaining the competitive guarantee.}
We maintain the nonnegative budget
$\Phi(t)=(2+\varepsilon)\OPT(t)-C_{\ALG}(t)$,
where $\OPT(t)$ is the minimum offline cost up to time $t$, allowing packets to remain unacknowledged and counting their waiting cost only up to $t$.
Thus, $\Phi(t)$ is the additional cost that the algorithm can incur without violating the competitive bound at that time.

We run the learner on time intervals called blocks, whose sizes are proportional to the budget available at their starts.
Within each block, we combine the learner with a randomized threshold algorithm from the family of \citet{karlin-kenyon-randall}.
If the learner's cumulative cost becomes too large relative to that of the threshold algorithm, we switch to the latter.
This limits the realized cost of a block while preserving a regret bound against each expert.
Moreover, the threshold algorithm has a competitive ratio strictly below two in expectation.
We use this advantage to show that completed blocks increase the budget in expectation and to bound the sum of their regret contributions.

When the budget is too small to start a learning block, we instead run a separate procedure that increases it by at least one before returning.
This procedure adaptively randomizes between two thresholds and has expected cost $O(\log(1/\varepsilon))$ per invocation.
We show that it is invoked only $O(1)$ times in expectation, and hence its total contribution to regret is also $O(\log(1/\varepsilon))$.

\subsection{Related work}

\paragraph{Competitive analysis of TCP acknowledgment.}
\citet{dooly-goldman-scott} provide a deterministic $2$-competitive algorithm that acknowledges whenever the accumulated waiting cost since its previous acknowledgment reaches one, and establish the optimality of this competitive ratio.
\citet{seiden} proves a lower bound of $e/(e-1)$ for randomized online algorithms.
\citet{karlin-kenyon-randall} attain this ratio using a different threshold family: they sample a single threshold at the start and use it throughout the execution.
\citet{basiak-et-al} study the tradeoff between expected competitiveness and competitive guarantees that hold for every realization of the random choices.
Several extensions of TCP acknowledgment have also been studied \citep{albers-bals,bienkowski-et-al,kawase-nakayoshi,bhore-pawlowski-umboh}.

\citet{imreh-nemeth} adapt the acknowledgment threshold using past arrivals and evaluate the resulting algorithm experimentally.
\citet{im-moseley-xu-zhang} use predictions of the arrival sequence to obtain consistency and robustness guarantees.
\citet{bamas-maggiori-svensson} give a multiplicative comparison with a predicted schedule through a learning-augmented primal-dual method.
Our learning guarantee is an additive comparison with the best online expert, with no assumption on prediction accuracy.

\paragraph{Simultaneous competitive and regret guarantees.}
Whether a single algorithm can achieve both guarantees depends on the problem and the benchmarks used.
\citet{andrew-et-al} show that a constant competitive ratio and sublinear static regret can be incompatible in smoothed online convex optimization.
For metrical task systems, \citet{daniely-mansour} retain a baseline algorithm's competitive ratio in expectation while obtaining regret bounds on every interval.
Their competitive guarantee permits an additive constant, whereas ours bounds realized total cost by a multiple of $\OPT$ and provides regret bounds for the full input.
\citet{goel-et-al} obtain sublinear policy regret and an optimal competitive ratio up to a sublinear additive term in online linear control.
\citet{harada-kawase-sumita} study the compatibility of per-round baseline guarantees and sublinear regret in repeated optimal stopping.

\paragraph{Combining online algorithms.}
Expert-learning methods have also been used to combine online algorithms, with the goal of performing nearly as well as the best constituent algorithm.
\citet{blum-burch} apply expert-learning methods to combine $k$ online algorithms for a metrical task system of diameter $D$, obtaining expected cost at most $(1+\eta)$ times the cost of the best algorithm plus $O(D\log k/\eta)$.
\citet{buchbinder-et-al} give a unified algorithm that interpolates, through parameter tuning, between regret guarantees for expert learning and competitive guarantees for metrical task systems.
Algorithm combination also appears in work on untrusted predictions and comparison with a sequence of predictors \citep{antoniadis-et-al-2020,antoniadis-et-al-2023}.

\section{Model}
\label{sec:model}

An input $\sigma$ consists of packets indexed by $[n]\coloneqq\{1,\ldots,n\}$, arriving online at times $0\le a_1\le\cdots\le a_n$.
The algorithm is not informed when the input ends and does not know $n$ in advance.
At each time, the algorithm first observes all packets arriving at that time.
If any packets are pending, it may then send an acknowledgment, which costs one and serves all pending packets.
Let $b_1<\cdots<b_m$ be the acknowledgment times of an algorithm $A$.
Each of the intervals $[0,b_1]$ and $(b_{i-1},b_i]$ for $i=2,\ldots,m$ is called an \emph{acknowledgment interval}.
The algorithm must serve all packets (i.e., $b_m\ge a_n$), and the waiting cost of packet $r\in[n]$ is $\min_{i\in[m]:\,b_i\ge a_r}(b_i-a_r)$.

Let $Q_A(t)\subseteq[n]$ be the set of packets pending under algorithm $A$ after processing all arrivals and any acknowledgment at time $t$.
Since each pending packet accumulates waiting cost at unit rate, the waiting cost incurred over $[v,t]$ is $\int_v^t|Q_A(u)|\,du$ for $0\le v\le t$.
The total cost of $A$ on $\sigma$ is
\begin{align*}
 C_A(\sigma)&=m+\sum_{r\in[n]}\min_{i\in[m]:\,b_i\ge a_r}(b_i-a_r)
 =m+\int_0^\infty|Q_A(u)|\,du.
\end{align*}
Let $\OPT(\sigma)$ denote the minimum total cost achievable with the input known in advance.
We use $C_A(\sigma,t)$ for the total acknowledgment and waiting cost incurred up to time $t$, including any acknowledgment at $t$.
We write $C_A(\sigma,t^-)$ for the cost immediately before the arrivals and acknowledgment decision at $t$, with $C_A(\sigma,0^-)=0$.
Thus, $C_A(\sigma,t)-C_A(\sigma,t^-)$ is the acknowledgment cost at time $t$.
When the input is clear, we abbreviate the total costs as $C_A$ and $\OPT$ and omit $\sigma$ from the time-dependent quantities.
For a nonempty input, $1\le\OPT\le n$, since acknowledging immediately at each distinct arrival time costs at most $n$.

We are given $k$ randomized online algorithms $A_1,\ldots,A_k$ as experts.
We assume that an algorithm $\ALG$ can simulate each expert online on the same input, observing its acknowledgment decisions and realized costs before making its own acknowledgment decision at each time.
We consider an oblivious adversary: the input is fixed independently of all random choices.
The experts' random choices are independent of those of $\ALG$, and expectations are taken over both unless stated otherwise.

The algorithms we construct combine multiple algorithms.
At each time $t$, the following order applies, where $A$ is the algorithm being followed immediately before processing time $t$:
(1) receive all packets arriving at $t$;
(2) if $A$ acknowledges at $t$ and any packets are pending, send an acknowledgment;
(3) determine whether to switch to another algorithm; and
(4) if a switch is made, acknowledge any pending packets and then follow the new algorithm.

For $\alpha\ge1$, our competitive guarantee requires the realized total cost to satisfy $C_{\ALG}(\sigma)\le\alpha\OPT(\sigma)$ for every input $\sigma$ and every possible realization of the random choices.
We define expected regret as $\Reg(\ALG)\coloneqq\E[C_{\ALG}-\min_{1\le i\le k}C_{A_i}]$, comparing the algorithm with the expert having the smallest realized total cost on the same input.
Since $\E[\min_{1\le i\le k}C_{A_i}]\le\min_{1\le i\le k}\E[C_{A_i}]$, $\Reg(\ALG)$ also bounds $\E[C_{\ALG}]-\min_{1\le i\le k}\E[C_{A_i}]$.

\section{Simultaneous competitive and regret guarantees}
\label{sec:main-result}

We provide an algorithm with a $(2+\varepsilon)$ competitive guarantee and expected regret sublinear in $\OPT$ for fixed $k$ and $\varepsilon$.

\begin{theorem}
\label{thm:simultaneous-guarantees}
For every $\varepsilon\in(0,1)$ and every family of $k\ge1$ randomized online algorithms for TCP acknowledgment, there is a randomized online algorithm $\ALG$ such that, on every input, its realized total cost is always at most $(2+\varepsilon)\OPT$ and its expected regret satisfies
\begin{align}
 \Reg(\ALG)&\le O\left(\sqrt{\OPT\log(k+1)}+\log\frac1\varepsilon\right).
 \label{eq:regret-upper-bound}
\end{align}
\end{theorem}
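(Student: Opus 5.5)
We will prove Theorem~\ref{thm:simultaneous-guarantees} by building $\ALG$ from three components, as sketched in the overview: a switching-cost learner over the experts, a block structure driven by the budget $\Phi(t)=(2+\varepsilon)\OPT(t)-C_{\ALG}(t)$, and a budget-raising procedure for when $\Phi$ is small. The invariant is $\Phi(t)\ge0$ at every block boundary, together with a worst-case bound on the cost inside each block. At termination we have $\OPT(t)=\OPT$ once all packets are served, so this invariant gives the realized competitive bound. The regret is then bounded separately by summing the per-block regret contributions and the cost of the budget-raising calls.

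\textbf{Step 1 (learner).} Partition time into rounds in which each expert's cost is $O(1)$; for instance, end a round whenever some expert accumulates cost one. The switching rule is to acknowledge all pending packets when switching to expert $i$. It then gives $Q_{\ALG}(u)\subseteq Q_{A_i}(u)$, and copying $A_i$'s acknowledgments preserves this inclusion, so in each round the learner pays at most the cost of $A_i$ plus one. Running the lazy multiplicative-weights method of \citet{geulen-voecking-winkler}, whose expected number of switches is bounded by its loss-based regret, gives expected regret $O(\sqrt{L\log(k+1)})$ over a block in which the best expert's cost is $L$. A doubling trick removes the need to know $L$.

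\textbf{Step 2 (blocks and safety).} A block starts when $\Phi\ge c$ for a suitable constant $c$. The block has length proportional to $\Phi$ at its start, measured by the increase of $\OPT(t)$ or of the threshold algorithm's cost. Inside the block we run the learner in parallel with a randomized threshold algorithm $T$ from \citet{karlin-kenyon-randall}, restarted at the block boundary. If the learner's cumulative cost exceeds that of $T$ plus a margin of $\Theta(\Phi_{\text{start}})$, we switch to $T$ for the rest of the block. The realized cost of the block is then at most $2\cdot$(the block's $\OPT$ increment) $+ O(1)+$ margin, and the margin is covered by $\Phi_{\text{start}}$, so $\Phi$ never becomes negative. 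In expectation, $T$ costs at most $\frac{e}{e-1}$ times the block's $\OPT$ increment $\Delta$. So either the learner was never abandoned, and its regret is bounded as in Step 1, or it was abandoned, and then the learner had been worse than $T$. In the abandoned case the best expert in hindsight for that block is no cheaper than $T$ up to the learner's regret, since abandonment only happens when the learner's regret estimate is large. In both cases the block's expected regret is $O(\sqrt{\Delta\log(k+1)})$. Moreover, $\Phi$ grows by at least $(2+\varepsilon-\frac{e}{e-1})\Delta - O(\sqrt{\Delta\log(k+1)})$ in expectation. Block sizes therefore grow geometrically, and by Cauchy--Schwarz over geometrically growing $\Delta$'s the regret sum is $O(\sqrt{\OPT\log(k+1)})$.

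\textbf{Step 3 (budget raising).} When $\Phi<c$, we run a procedure that randomizes between two acknowledgment thresholds, chosen adaptively from the current budget. It must stay within the budget for every realization while ensuring that, in expectation, $\Phi$ increases by at least one before the procedure returns. Starting from a tiny budget such as $\varepsilon$, the budget grows geometrically, roughly doubling per phase. This takes $O(\log(1/\varepsilon))$ phases, each of cost $O(1)$.

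We must also show that this procedure is invoked only $O(1)$ times in expectation. For this we would use the positive expected drift of $\Phi$ from Step 2 together with a supermartingale or gambler's-ruin argument: once $\Phi$ is large, the probability that it falls back below $c$ decays geometrically.

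\textbf{Main obstacle.} The hardest part is Step 3 combined with the return-count bound. The procedure must be safe for every realization with arbitrarily small slack, and we must show that its expected cost is only logarithmic in $1/\varepsilon$. For the two-threshold design, we would first try acknowledging at accumulated waiting cost one, which is the deterministic 2-competitive rule, or at a smaller threshold chosen with probability tuned so that $\Phi$ cannot go negative. We would then verify by a case analysis on whether a new packet arrives before the threshold.
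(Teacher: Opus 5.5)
Your architecture matches the paper's: a synchronizing learner, budget-sized blocks that pair the learner with a threshold algorithm, a budget-raising routine, and a potential argument for how often that routine runs. The gap is the one-way abandonment in Step~2, which breaks the regret bound. Your claim that in the abandoned case ``the best expert in hindsight for that block is no cheaper than $T$'' is false. Abandonment at $t^*$ only certifies that the learner, and hence up to its regret every expert, was costlier than $T$ on $[t_0,t^*]$. For the rest of the block you pay $T$'s cost, and there an expert can beat $T$ by a constant factor. For example, on packets spaced more than one apart, immediate acknowledgment pays $1$ per packet while $A_{\mathrm{R}}(z)$ pays $1+z$. Suppose a short burst of closely spaced packets at the start of the block pushes the experts just past your margin. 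Then the block's regret is a constant times the remaining $\OPT$ increment, i.e., $\Theta(b)$ rather than $O(\sqrt{b\log(k+1)})$. Separately, the learner's bound holds only unconditionally in expectation. You therefore cannot apply it on the event ``never abandoned,'' which depends on the learner's own coins.

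The missing idea is two-way switching with hysteresis at a random level. The paper follows $A_{\mathrm{L}}$ until $D(t)=C^{t_0}_{A_{\mathrm{L}}}(t)-C^{t_0}_{A_{\mathrm{R}}(Z)}(t)\ge\xi+1$, then follows $A_{\mathrm{R}}(Z)$ until $D(t)\le\xi$. This gives, for every realization, both $X\le C^{t_0}_{A_{\mathrm{L}}}+5\nu+5$ (for regret) and $X\le C^{t_0}_{A_{\mathrm{R}}(Z)}+\xi+5\nu+4$ (for safety and drift), where $\nu$ counts returns to the learner. With $\xi$ uniform on $[0,b/8]$, each downward crossing of $\xi$ has probability at most $8/b$ times the corresponding drop of $D$. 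That drop is at most the increase of $C^{t_0}_{A_{\mathrm{R}}(Z)}$, so $\E[\nu]\le(8/b)\E[C^{t_0}_{A_{\mathrm{R}}(Z)}]=O(1)$.

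These bounds also expose two further requirements your sketch misses. First, the margin must be a small fraction of $b$ so that $\E[X]<2b$; the paper gets $11b/6+b/16$ plus $O(1)$. Second, $T$ needs thresholds bounded away from zero, as in the paper's $Z\in\{2/3,1\}$. The realized ratio of $A_{\mathrm{R}}(z)$ is unbounded as $z\to0$, so the continuous distribution of Karlin, Kenyon, and Randall violates your realized per-block cost bound.

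Your Step~3 is only a plan. The paper runs trials of $A_{\mathrm{R}}(u)$ with $u\in\{1,1-\beta/4\}$, each capped at three acknowledgments. Using $C_{A_{\mathrm{R}}(u)}\le2\OPT-(1-u)(2N_1-N_u)$, it shows that either another trial occurs with probability at most $1/2$, or the $u=1-\beta/4$ branch multiplies the accumulated gain $\beta$ by $5/4$. Induction on the number of remaining packets then gives $O(\log(1/\varepsilon))$ expected trials.

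Finally, budget growth holds only in expectation, so Cauchy--Schwarz over geometrically growing sizes does not suffice to sum block regrets. You need a potential such as $\sqrt{\Phi}$, which increases in expectation by $\Omega(\sqrt{\Phi}/\theta)$ per block.
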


We first recall the greedy algorithm of \citet{dooly-goldman-scott} and establish cost bounds for the threshold algorithms of \citet{karlin-kenyon-randall} in Section~\ref{sec:rectangle-thresholds}.
In Section~\ref{sec:learning}, we construct a learner using only the given experts.
Section~\ref{sec:recovery} introduces the available budget, which measures how much more we can spend under the competitive guarantee, and gives a procedure for increasing it.
Section~\ref{sec:competitive-guarantee} combines these components by switching between the learner and a randomized threshold algorithm on blocks sized according to the budget.

\subsection{Competitive algorithms}
\label{sec:rectangle-thresholds}

Let $\OPT(t)$ be the minimum cost through time $t$ of an offline schedule that may leave packets pending, including their waiting cost through $t$.
It depends only on packets received by time $t$.
The function $\OPT(t)$ is continuous and nondecreasing in $t$, and $\OPT(t)=\OPT$ for all sufficiently large $t$.

The greedy algorithm $A_{\mathrm{G}}$ acknowledges whenever the waiting cost accumulated since its previous acknowledgment reaches one.
Its standard competitive analysis \citep{dooly-goldman-scott} shows $C_{A_{\mathrm{G}}}(t)\le2\OPT(t)$ for every input and time $t$.

For $0<z\le1$, let $A_{\mathrm{R}}(z)$ be the algorithm of \citet{karlin-kenyon-randall} that acknowledges as soon as there is an integer $j\ge1$ for which at least $j$ pending packets have each waited at least $z/j$ time units.
For $z=0$, we define $A_{\mathrm{R}}(0)$ to acknowledge each packet immediately upon arrival.

\begin{lemma}
\label{lem:two-thresholds}
Let $N_z(t)$ be the number of acknowledgments sent by $A_{\mathrm{R}}(z)$ through time $t$.
For every input, time $t$, and $z\in(0,1]$, we have $N_1(t)\le N_z(t)$ and
\begin{align*}
 zN_z(t)&\le C_{A_{\mathrm{R}}(z)}(t)-N_z(t)\le\OPT(t)-(1-z)N_1(t).
\end{align*}
\end{lemma}

The proof is given in Appendix~\ref{app:proof-two-thresholds}.

The next lemma bounds the cost for each threshold and the expected cost of randomizing between two thresholds, and provides the inequalities used to analyze the available budget in Section~\ref{sec:recovery}.

\begin{lemma}
\label{lem:random-comparison}
For every input, time $t$, and $u\in(0,1]$, we have
\begin{align*}
 C_{A_{\mathrm{R}}(u)}(t)&\le\min\bigl\{\OPT(t)+N_u(t),\ 2\OPT(t)-(1-u)(2N_1(t)-N_u(t))\bigr\}
 \le\left(1+\frac1u\right)\OPT(t).
\end{align*}
Moreover, for any $x\in(0,1)$, let $Z$ equal $1$ or $x$, each with probability $1/2$. Then,
\begin{align*}
 \E[C_{A_{\mathrm{R}}(Z)}(t)]&\le\left(1+\max\left\{\frac1{2x},\ \frac{1+x}{2}\right\}\right)\OPT(t).
\end{align*}
\end{lemma}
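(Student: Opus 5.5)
The first display follows from Lemma~\ref{lem:two-thresholds} by rearranging. The second display is then a small linear program in the two acknowledgment counts $N_1(t)$ and $N_u(t)$. Throughout, fix the input and $t$, and abbreviate $O=\OPT(t)$, $N_u=N_u(t)$, $C_u=C_{A_{\mathrm{R}}(u)}(t)$.

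\textbf{First display.} Lemma~\ref{lem:two-thresholds} gives three facts:
\begin{align*}
 uN_u\le C_u-N_u, \qquad C_u-N_u\le O-(1-u)N_1, \qquad N_1\le N_u .
\end{align*}
Since $(1-u)N_1\ge0$, the second fact gives $C_u\le O+N_u$. For the other term of the minimum, write the target as $2O-2(1-u)N_1+(1-u)N_u$. The sharper bound $C_u\le O+N_u-(1-u)N_1$ implies it as soon as $uN_u+(1-u)N_1\le O$. This inequality is exactly the chain $uN_u\le C_u-N_u\le O-(1-u)N_1$ from the lemma. Finally, the same chain gives $uN_u\le O$, so $N_u\le O/u$. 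Hence the first term of the minimum is at most $(1+1/u)O$.

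\textbf{Randomized bound.} By linearity, $\E[C_{A_{\mathrm{R}}(Z)}(t)]=(C_1+C_x)/2$. The first display with $u=1$ gives $C_1\le O+N_1$. With $u=x$ it gives
\begin{align*}
 C_x\le\min\{O+N_x,\ 2O-2(1-x)N_1+(1-x)N_x\}.
\end{align*}
Write $a=N_1$ and $b=N_x$. It suffices to maximize
\begin{align*}
 f(a,b)=\tfrac12\bigl(O+a+\min\{O+b,\ 2O-2(1-x)a+(1-x)b\}\bigr)
\end{align*}
over the polygon $P=\{(a,b): 0\le a\le b,\ xb+(1-x)a\le O\}$. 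Here the constraint $xb+(1-x)a\le O$ is Lemma~\ref{lem:two-thresholds} with $z=x$, and $a\le b$ is the monotonicity $N_1\le N_x$ from the same lemma.

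The function $f$ is concave and piecewise linear. Its maximum over $P$ is therefore attained at a vertex of $P$, or at a point where the breakpoint line $xb=O-2(1-x)a$ meets the boundary of $P$.

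The vertices of $P$ are:
\begin{itemize}
 \item $(0,0)$, giving $f=O$;
 \item $(0,O/x)$, giving $f=\tfrac12(2O+O/x)=(1+\tfrac1{2x})O$;
 \item $(O,O)$, giving $f=\tfrac12\bigl(2O+(1+x)O\bigr)=(1+\tfrac{1+x}2)O$.
\end{itemize}

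The breakpoint line meets $\partial P$ at two points:
\begin{itemize}
 \item $(0,O/x)$ on the edge $a=0$, already checked;
 \item $a=b=O/(2-x)$ on the edge $a=b$, giving $f=\tfrac12\bigl(2O+2O/(2-x)\bigr)=\tfrac{3-x}{2-x}\,O$.
\end{itemize}

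\textbf{Main obstacle: this last breakpoint.} The step I expect to need the most care is confirming that this breakpoint does not exceed the claimed maximum. It suffices to show $\tfrac{3-x}{2-x}\le\tfrac{3+x}{2}$, that is, $1+\tfrac1{2-x}\le1+\tfrac{1+x}2$, which reduces to $2\le(1+x)(2-x)=2+x-x^2$. This holds because $x(1-x)\ge0$ for $x\in(0,1)$.

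Taking the largest of the values above, $f\le\bigl(1+\max\{\tfrac1{2x},\tfrac{1+x}2\}\bigr)O$, which is the claimed bound.
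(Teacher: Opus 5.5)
Your proof is correct and follows the paper's route: the first display is derived from Lemma~\ref{lem:two-thresholds} exactly as in the paper, and the randomized bound reduces to the same two-variable linear program over the constraints $N_1(t)\le N_x(t)$ and $xN_x(t)+(1-x)N_1(t)\le\OPT(t)$. The only difference is that the paper averages the sharper linear bound $C_{A_{\mathrm{R}}(x)}(t)\le\OPT(t)+N_x(t)-(1-x)N_1(t)$ instead of the minimum from the first display. This makes the objective linear and removes the need for your breakpoint check at $a=b=\OPT(t)/(2-x)$, which you nonetheless handled correctly.
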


The proof is given in Appendix~\ref{app:proof-random-comparison}.

\subsection{Learning with switching costs}
\label{sec:learning}

Algorithm~\ref{alg:learning} constructs the learner $A_{\mathrm{L}}$ from the given experts $A_1,\ldots,A_k$, simulated online.

Let $\mathcal{T}_{\mathrm{ack}}$ be the set of times at which at least one expert sends an acknowledgment.
A round ends when an expert acknowledges or when some expert has incurred one unit of waiting cost during that round.
For convenience, set $\tau_0=0^-$, with $t>0^-$ interpreted as $t\ge0$ and integrals from $0^-$ evaluated from $0$.
For $s\ge0$, define the end of round $s+1$ by
\begin{align*}
 \tau_{s+1}
 &=\inf\left\{t>\tau_s:
 t\in\mathcal{T}_{\mathrm{ack}}
 \ \text{or}\ 
 \max_{i\in[k]}\int_{\tau_s}^t|Q_{A_i}(u)|\,du=1\right\},
\end{align*}
where $\inf\emptyset=\infty$.
These boundaries are detected online by simulating the experts.

Round $s\ge1$ is $(\tau_{s-1},\tau_s]$, with $(0^-,t]\coloneqq[0,t]$.
In particular, acknowledgments at $\tau_s$ are included in round $s$.
Define the expert losses by
$\ell_s(i)=C_{A_i}(\tau_s)-C_{A_i}(\tau_{s-1})$ for $s\ge1$.
Each expert incurs at most one unit of waiting cost and at most one acknowledgment cost in a round.
Hence, $\ell_s\in[0,2]^k$.

We call the acknowledgment sent at an expert switch in Algorithm~\ref{alg:learning} synchronization.
Excluding synchronization, the learner's cost in each round is at most the selected expert's cost.
Each switch therefore adds at most one to the cost of following the experts.

The weight updates and retention/redraw rule in Algorithm~\ref{alg:learning} implement the multiplicative-weights method of \citet{geulen-voecking-winkler} with normalized losses $\ell_s(i)/2$.

\begin{algorithm2e}[htbp]
\caption{Learning with switching costs ($A_{\mathrm{L}}$)}
\label{alg:learning}
\KwIn{Experts $A_1,\ldots,A_k$ and fixed learning rate $0<\eta\le1/2$}
Set $w_{i,1}\gets1/k$ for $i\in[k]$ and draw $I_1$ uniformly from $[k]$\;
\For{$s=1,2,\ldots$}{
 Follow expert $A_{I_s}$ during round $s$, skipping acknowledgments when no packets are pending, and observe $\ell_s\in[0,2]^k$\;
 \tcp{Time $\tau_s$}
 Set $w_{i,s+1}\gets w_{i,s}(1-\eta)^{\ell_s(i)/2}$ for all $i\in[k]$\;
 With probability $w_{I_s,s+1}/w_{I_s,s}$, set $I_{s+1}\gets I_s$\;
 Otherwise, set $I_{s+1}\gets i$ with probability $w_{i,s+1}/\sum_{j=1}^{k}w_{j,s+1}$ for $i\in[k]$\;
 If $I_{s+1}\ne I_s$, acknowledge any pending packets\;
}
\end{algorithm2e}

\begin{lemma}
\label{lem:expected-switching-learner}
Fix the input and the experts' random choices.
For any fixed number $h$ of completed rounds, let $Y$ be the learner's cost through the end of round $h$, and let $L_i=\sum_{s=1}^h\ell_s(i)$ be expert $A_i$'s cost over these rounds.
Then, for every $i\in[k]$, it holds that
\begin{align}
 \E[Y]-L_i&\le4\eta L_i+\frac{4\ln k}{\eta}.
 \label{eq:expected-switching-learner}
\end{align}
The expectation is taken over the learner's expert selections.
\end{lemma}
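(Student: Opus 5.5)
Two ingredients drive the argument: a per-round comparison between the learner and the expert it follows, and the standard analysis of the Geulen--V\"ocking--Winkler scheme. The bound \eqref{eq:expected-switching-learner} follows once the learner's cost is split into ``expert cost'' plus ``switching cost''.

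\textbf{Step 1: pathwise inequality.} We show that, for every realization of the selections,
\begin{align*}
 Y\le\sum_{s=1}^h\ell_s(I_s)+\#\{s\le h-1: I_{s+1}\ne I_s\}.
\end{align*}
The key invariant is that at every time during round $s$ the learner's pending set satisfies $Q_{A_{\mathrm L}}(u)\subseteq Q_{A_{I_s}}(u)$. At time $0$ both sets are empty. At a switch at $\tau_s$, synchronization empties the learner's pending set, so inclusion holds for the new expert. While following an expert, new arrivals join both sets. Whenever the expert acknowledges, the learner acknowledges too (skipped if nothing is pending), so inclusion persists. Consequently, within round $s$ the learner's waiting cost is at most the expert's, and its acknowledgments, excluding synchronization, are a subset of the expert's acknowledgment times in $(\tau_{s-1},\tau_s]$. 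Each synchronization costs at most one.

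\textbf{Step 2: expectations.} The learner's randomness is independent of the losses, since the input and the experts' random choices are fixed, so the loss sequence is oblivious. Geulen--V\"ocking--Winkler's coupling keeps the marginal $\Pr[I_s=i]=p_{i,s}\coloneqq w_{i,s}/W_s$, where $W_s=\sum_jw_{j,s}$. This is checked by induction: $\Pr[I_{s+1}=i]=p_{i,s}\,w_{i,s+1}/w_{i,s}+(1-\sum_jp_{j,s}w_{j,s+1}/w_{j,s})\,w_{i,s+1}/W_{s+1}$, and the bracket equals $1-W_{s+1}/W_s$, giving $w_{i,s+1}/W_{s+1}$. The switch probability at the end of round $s$ is at most $\sum_ip_{i,s}(1-(1-\eta)^{\ell_s(i)/2})\le\eta\sum_ip_{i,s}\ell_s(i)/2$, using $1-(1-\eta)^x\le\eta x$ for $x\in[0,1]$. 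Hence
\begin{align*}
 \E[Y]\le(1+\eta/2)\,\bar L,\qquad \bar L\coloneqq\sum_s\sum_ip_{i,s}\ell_s(i).
\end{align*}

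\textbf{Step 3: MW potential bound.} This is the main computational step. Comparing $W_{h+1}\ge w_{i,h+1}=(1-\eta)^{L_i/2}/k$ with $W_{s+1}/W_s=\sum_ip_{i,s}(1-\eta)^{\ell_s(i)/2}\le1-\eta\sum_ip_{i,s}\ell_s(i)/2$ gives, after taking logarithms,
\begin{align*}
 \eta\bar L/2\le\ln k+(L_i/2)\ln\frac1{1-\eta}.
\end{align*}
Using $\ln\frac1{1-\eta}\le\eta+\eta^2$ for $\eta\le1/2$, this yields $\bar L\le(1+\eta)L_i+2\ln k/\eta$. Then
\begin{align*}
 \E[Y]\le(1+\eta/2)(1+\eta)L_i+(1+\eta/2)\cdot2\ln k/\eta.
\end{align*}
For $\eta\le1/2$ we have $(1+\eta/2)(1+\eta)-1=\tfrac32\eta+\tfrac12\eta^2\le\tfrac74\eta\le4\eta$ and $2(1+\eta/2)\le 5/2\le4$, which gives \eqref{eq:expected-switching-learner}.

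The delicate part is Step 1's invariant, specifically the boundary convention that acknowledgments at $\tau_s$ belong to round $s$ and synchronization occurs after them. The expert that acknowledged at $\tau_s$ is still the one followed when that acknowledgment is copied, so its cost is charged to $\ell_s(I_s)$.
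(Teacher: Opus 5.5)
Your outline matches the paper's proof. Step 1's invariant is correct and is more detailed than the paper's one-line remark. The marginal identity $\Pr[I_s=i]=p_{i,s}$ and the potential bound $\bar L\le(1+\eta)L_i+2\ln k/\eta$ in Step 3 are also correct.

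The gap is in Step 2: the inequality $1-(1-\eta)^x\le\eta x$ on $[0,1]$ is false. The map $x\mapsto(1-\eta)^x$ is strictly convex and takes the values $1$ and $1-\eta$ at the endpoints. Hence $(1-\eta)^x<1-\eta x$, i.e.\ $1-(1-\eta)^x>\eta x$, for all $x,\eta\in(0,1)$. This is exactly the convexity inequality you use in Step 3 to bound $W_{s+1}/W_s$, so Steps 2 and 3 contradict each other. For instance, $\eta=x=1/2$ gives $1-2^{-1/2}\approx0.293>0.25$. Consequently, your bound $\E[Y]\le(1+\eta/2)\bar L$ is not established.

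The fix is immediate and is what the paper does. Write
$1-(1-\eta)^x=1-e^{-x\ln(1/(1-\eta))}\le x\ln\frac{1}{1-\eta}\le(\eta+\eta^2)x$,
using $1-e^{-y}\le y$ and the logarithm estimate you already invoke. With $x=\ell_s(i)/2$, the expected number of switches is at most $\frac{\eta+\eta^2}{2}\bar L\le\eta\bar L$. Then
$\E[Y]\le(1+\eta)\bar L\le(1+\eta)^2L_i+2(1+\eta)\ln k/\eta$.
For $\eta\le1/2$ we have $(1+\eta)^2-1=2\eta+\eta^2\le4\eta$ and $2(1+\eta)\le4$. The lemma's constants have room for this, so the conclusion stands once Step 2 is corrected.

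A minor indexing point: under the paper's convention, costs through $\tau_h$ include acknowledgments at $\tau_h$. So $Y$ contains a possible synchronization at $\tau_h$, and the switch count in Step 1 should range over $s\le h$. Your Step 2 already sums switch probabilities over all $h$ rounds, so nothing else changes.
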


The proof is given in Appendix~\ref{app:proof-expected-switching-learner}.

\subsection{Increasing the budget}
\label{sec:recovery}

We maintain the competitive guarantee while using $A_{\mathrm{L}}$ by keeping $C_{\ALG}(t)\le(2+\varepsilon)\OPT(t)$ at all times.
The \emph{available budget} $\Phi(t)\coloneqq(2+\varepsilon)\OPT(t)-C_{\ALG}(t)$ is the additional cost we can pay without exceeding this bound.
Algorithm~\ref{alg:recovery} starts with no pending packets and increases this budget by at least one when it returns.
A simple choice is to run a fresh execution of $A_{\mathrm{G}}$: each completed acknowledgment interval costs two and increases the budget by at least $\varepsilon$.
Thus, $\lceil1/\varepsilon\rceil$ completed acknowledgment intervals suffice to increase the budget by at least one, at a total cost of $2\lceil1/\varepsilon\rceil=O(1/\varepsilon)$.
Algorithm~\ref{alg:recovery} instead uses the algorithms $A_{\mathrm{R}}(u)$ from Section~\ref{sec:rectangle-thresholds} and achieves an expected total cost of $O(\log(1/\varepsilon))$.

For $t\ge t_0$, let $\OPT^{t_0}(t)$ be the minimum offline cost through $t$ restricted to packets arriving in $(t_0,t]$.
We use the same superscript for costs and acknowledgment counts of a fresh algorithm execution on these packets.
For the initial execution, let $(0^-,t]\coloneqq[0,t]$ and $\OPT^{0^-}(t)\coloneqq\OPT(t)$ for $t\ge0$, with $\OPT(0^-)\coloneqq0$.
For any offline schedule through $t$, its cost through $t_0$ is at least $\OPT(t_0)$.
Its cost after $t_0$ is at least $\OPT^{t_0}(t)$, since restricting this part of the schedule to packets arriving after $t_0$ can only decrease its cost.
Consequently,
\begin{align}
 \OPT(t)&\ge\OPT(t_0)+\OPT^{t_0}(t).
 \label{eq:partial-superadditive}
\end{align}

Algorithm~\ref{alg:recovery} specifies the trials and their stopping conditions.
Let $t_0$ be the starting time of the algorithm and $t_j$ the end of trial $j$, after any final acknowledgment.
The threshold choices use independent fair coins, and the budget stopping condition is monitored continuously as time passes.

\begin{algorithm2e}[htbp]
\caption{Budget increase by one}
\label{alg:recovery}
\KwIn{Parameter $0<\varepsilon<1$}
Set $\Phi_{\mathrm{start}}\gets\Phi(t)$ and $j\gets1$\;
\While{$\Phi(t)<\Phi_{\mathrm{start}}+1$}{
 \tcp{Time $t_{j-1}$; no packets are pending}
 Set $\beta_j\gets\Phi(t)-\Phi_{\mathrm{start}}$\;
 Set $u_j\gets1$ with probability $1/2$ and $u_j\gets1-\beta_j/4$ otherwise\;
 Start a fresh execution of $A_{\mathrm{R}}(u_j)$ on subsequent packets and run it until it has sent three acknowledgments or $\Phi(t)\ge\Phi_{\mathrm{start}}+2$\;
 Acknowledge any pending packets\;
 \tcp{Time $t_j$}
 Set $j\gets j+1$\;
}
\end{algorithm2e}

Algorithm~\ref{alg:recovery} may never return if all packets have arrived and been acknowledged during a trial without either stopping condition being met.
The cost bound in Lemma~\ref{lem:recovery} also covers this case.

\begin{lemma}
\label{lem:recovery}
Suppose a call to Algorithm~\ref{alg:recovery} starts at time $t_0$ with no pending packets.
Throughout this call, $\Phi(t)\ge\Phi(t_0)$.
If the call returns, its pending set is empty and the budget is at least $\Phi(t_0)+1$.
Fix the remaining input and condition on the history at $t_0$.
The expected total cost incurred by this call, summed over all its trials and including any final acknowledgments, is $O(\log(1/\varepsilon))$.
\end{lemma}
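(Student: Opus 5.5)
Our plan has three parts: the invariant $\Phi(t)\ge\Phi(t_0)$, the return condition, and the cost bound. The cost bound is the core of the argument.

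\emph{Invariant and return condition.} We work trial by trial. Fix trial $j$ on $(t_{j-1},t_j]$, which begins with no pending packets. By \eqref{eq:partial-superadditive}, $\OPT(t)-\OPT(t_{j-1})\ge\OPT^{t_{j-1}}(t)$. Since nothing is pending at $t_{j-1}$, the algorithm's cost increment over the trial is $C^{t_{j-1}}_{A_{\mathrm{R}}(u_j)}(t)$, plus one for the final forced acknowledgment. Hence
\[
\Phi(t)-\Phi(t_{j-1})\ge(2+\varepsilon)\OPT^{t_{j-1}}(t)-C^{t_{j-1}}_{A_{\mathrm{R}}(u_j)}(t)-[\text{final ack}].
\]
We apply Lemma~\ref{lem:random-comparison} to the fresh execution, using the bound $C\le 2\OPT-(1-u)(2N_1-N_u)$ together with $N_1\le N_u\le 3$.

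If $u_j=1$, the execution is greedy-like and $C\le 2\OPT^{t_{j-1}}$. During the trial, $\Phi$ decreases only through the final acknowledgment. The point to check is that a trial stops before the final acknowledgment only when a packet is pending. In that case the pending packet's partial interval contributes to $\OPT^{t_{j-1}}(t)$, and we claim this contribution is at least $1/2$ per pending nonempty interval, after using the slack $\varepsilon\OPT$. We would make this precise with the standard interval argument: in each acknowledgment interval of $A_{\mathrm{R}}$, $\OPT$ pays at least the minimum of $1$ and the waiting cost.

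If $u_j=1-\beta_j/4$, the loss of at most $(1-u_j)\cdot 3=3\beta_j/4<\beta_j$ is covered by the budget gained so far, $\beta_j=\Phi(t_{j-1})-\Phi_{\mathrm{start}}$. This is exactly why the threshold depends on $\beta_j$. Combining the two cases gives $\Phi(t)\ge\Phi_{\mathrm{start}}$ throughout. On return, the while-condition has failed and the last action was an acknowledgment, so the pending set is empty and $\Phi\ge\Phi_{\mathrm{start}}+1$.

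\emph{Cost bound.} We show that $\beta$ grows geometrically in expectation per trial, so $O(\log(1/\varepsilon))$ trials suffice once $\beta$ becomes positive. Each trial costs $O(1)$, at most $3$ acknowledgments plus bounded waiting. Three cases arise:
\begin{itemize}
\item A trial ending through the budget condition ends the call.
\item A trial ending because the input ended incurs $O(1)$ and never repeats.
\item A trial completing three acknowledgments of $A_{\mathrm{R}}(u_j)$ has $\OPT^{t_{j-1}}$ bounded below by a constant. By Lemma~\ref{lem:two-thresholds} and Lemma~\ref{lem:random-comparison}, it yields expected budget gain at least $c\max\{\varepsilon,\beta_j\}$.
\end{itemize}
For the third case, we average over the coin. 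With $u=1$, $C\le 2\OPT$ gives gain at least $\varepsilon\OPT$. With $u=1-\beta/4$, $C\le\OPT+N_u$ and $C\le 2\OPT-(1-u)(2N_1-N_u)$. The fair mixture, as in the second part of Lemma~\ref{lem:random-comparison}, has ratio about $2-\Theta(\beta)$, so the expected gain is at least $\Theta(\beta)\OPT^{t_{j-1}}$. The fair coin guarantees that an adversarial input cannot make both thresholds bad.

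The first stage takes $O(1/\varepsilon)$ worth of $\varepsilon$-gains only if $\beta$ stays near $0$. Because of the mixture, however, $\beta_{j+1}\ge(1+c)\beta_j+c\varepsilon$ in expectation. A drift or potential argument on $\log(\beta_j+\varepsilon)$ then shows that the expected number of trials to reach $\beta\ge1$ is $O(\log(1/\varepsilon))$, with $\beta\le 2$ maintained by the stopping rule.

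\emph{Main obstacle.} We expect the drift argument to be the main obstacle. The gains are only in expectation, and individual trials can reduce $\beta$ by up to $3\beta/4$, so a multiplicative-drift or supermartingale argument is needed. We would first try the potential $\Psi_j=\log_{2}\bigl(1/(\beta_j+\varepsilon)\bigr)$. The aim is to show that $\E[\Psi_{j+1}-\Psi_j\mid\text{history}]\le-c$ whenever the trial completes, and then apply optional stopping to bound the expected number of completed trials by $O(\Psi_1/c)=O(\log(1/\varepsilon))$. Because $\beta_{j+1}\ge\beta_j/4$, each step changes $\Psi$ by only a bounded amount, which keeps this potential manageable.
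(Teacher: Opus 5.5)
Your plan has the same overall shape as the paper's: an invariant, bounded cost per trial, and $\beta_j$ growing geometrically from $\Theta(\varepsilon)$. The drift step, however, has a genuine gap.

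\textbf{The drift claim.} You claim that a trial ending at its third acknowledgment gives expected gain at least $c\max\{\varepsilon,\beta_j\}$, by averaging the coin as in the second part of Lemma~\ref{lem:random-comparison}. That lemma bounds $\frac12\bigl(C_{A_{\mathrm{R}}(1)}(t)+C_{A_{\mathrm{R}}(z)}(t)\bigr)$ at one common time $t$. The two branches of trial $j$ stop at different, threshold-dependent times: the $z$-branch at its third acknowledgment $t_j^z$, the threshold-one branch at its own third acknowledgment or budget stop.

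The bounds you can actually prove are a gain of at least $\varepsilon\OPT^{t_{j-1}}$ for $u_j=1$, and $\varepsilon\OPT^{t_{j-1}}(t_j^z)+(1-z)\bigl(2N_1^{t_{j-1}}(t_j^z)-3\bigr)$ for $u_j=z$. If $N_1^{t_{j-1}}(t_j^z)\le1$, the second is a loss of up to $3\beta_j/4$, so the average need not be positive.

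\textbf{The missing dichotomy.} This is what drives the paper's proof. Let $\Phi_1$ be the budget along the threshold-one branch, and suppose it has not reached $\Phi_{\mathrm{start}}+2$ by $t_j^z$.
\begin{itemize}
\item Lemma~\ref{lem:two-thresholds} gives $\OPT^{t_{j-1}}(t_j^z)\ge3z$.
\item $C_{A_{\mathrm{R}}(1)}\le\OPT+N_1$ gives $2>\Phi_1(t_j^z)-\Phi_{\mathrm{start}}\ge\beta_j+\OPT^{t_{j-1}}(t_j^z)-N_1^{t_{j-1}}(t_j^z)$.
\item Together these force $N_1^{t_{j-1}}(t_j^z)>1+\beta_j/4$, so $N_1\ge2$, and the $z$-branch gains at least $(1-z)(4-3)=\beta_j/4$.
\end{itemize}
Otherwise the threshold-one branch ends the call. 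So one of two things holds:
\begin{itemize}
\item Both branches continue, in which case $\beta_{j+1}\ge\beta_j$ always and $\beta_{j+1}\ge5\beta_j/4$ when $u_j=z$.
\item With probability at least $1/2$ no further trial occurs, and otherwise only $\beta_{j+1}\ge\beta_j/4$ is guaranteed.
\end{itemize}
Your remark that the fair coin prevents both thresholds from being bad is this dichotomy, but you never prove it.

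\textbf{The potential.} Even granting the dichotomy, $\Psi=\log_2\bigl(1/(\beta+\varepsilon)\bigr)$ does not give $\E[\Psi_{j+1}-\Psi_j]\le-c$. Take $\beta_j\in(1/2,1)$, with one branch ending the call at $\beta\le2$ and the other guaranteed only $\beta_{j+1}\ge\beta_j/4$. The provable expected change is then about $\frac12\log_2(2\beta_j)>0$. Termination must count as a large drop.

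The paper does this by bounding the expected number of remaining trials by $c\ln(8/\beta_j)$, with a terminated call contributing zero. It proves this by induction on the number of remaining packets, where the two cases reduce to $1+\frac c2\ln\frac{32}{\beta_j}\le c\ln\frac8{\beta_j}$ and $1+\frac c2\ln\frac8{\beta_j}+\frac c2\ln\frac{32}{5\beta_j}\le c\ln\frac8{\beta_j}$. The first trial has $u_1=1$ and, if it continues, yields $\beta_2\ge3\varepsilon$, which gives the $O(\log(1/\varepsilon))$ bound.

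\textbf{The invariant.} Your handling of the final acknowledgment is also off. No claim that a pending packet contributes $1/2$ to $\OPT$ is needed, and none holds for a packet that has just arrived. After the third acknowledgment of the fresh $A_{\mathrm{R}}(u_j)$ nothing is pending. So the final acknowledgment costs anything only when the stop was triggered by $\Phi(t)\ge\Phi_{\mathrm{start}}+2$, and it then leaves at least $\Phi_{\mathrm{start}}+1$. The same stopping rule gives $(1+\varepsilon)\OPT^{t_{j-1}}(t)<5$ during a trial, and that is what bounds each trial's waiting cost.
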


\begin{proof}
Fix the input and the history at the start of trial $j$, and write $z=1-\beta_j/4$.
The cost and acknowledgment count of $A_{\mathrm{R}}(u)$ started at $t_{j-1}$ on subsequent packets are denoted by $C_{A_{\mathrm{R}}(u)}^{t_{j-1}}(t)$ and $N_u^{t_{j-1}}(t)$.
For either choice $u\in\{1,z\}$, \eqref{eq:partial-superadditive} and Lemma~\ref{lem:random-comparison} yield, before any final acknowledgment,
\begin{align}
 \Phi(t)-\Phi_{\mathrm{start}}
 &\ge\beta_j+(2+\varepsilon)\OPT^{t_{j-1}}(t)-C_{A_{\mathrm{R}}(u)}^{t_{j-1}}(t) \notag\\
 &\ge\beta_j+\varepsilon\OPT^{t_{j-1}}(t)+(1-u)\bigl(2N_1^{t_{j-1}}(t)-N_u^{t_{j-1}}(t)\bigr).
 \label{eq:recovery-budget}
\end{align}
Here, $\Phi(t)$ is evaluated along the execution with threshold $u$.
For $u=1$, the budget increase stays at least $\beta_j$.
For $u=z$, the bound $N_z^{t_{j-1}}(t)\le3$ keeps it at least $\beta_j-3(1-z)=\beta_j/4$.
Any final acknowledgment is needed only when the increase reaches two, and leaves at least one.
Starting with $\beta_1=0$, this proves that $\Phi$ never falls below $\Phi_{\mathrm{start}}$ and increases by at least one on return.

Before the trial stops, Lemma~\ref{lem:random-comparison} implies $C_{A_{\mathrm{R}}(u_j)}^{t_{j-1}}(t)\le\OPT^{t_{j-1}}(t)+3$ and hence $(1+\varepsilon)\OPT^{t_{j-1}}(t) \le\Phi(t)-\Phi_{\mathrm{start}}-\beta_j+3<5$.
Since $\OPT^{t_{j-1}}(t)$ is continuous, $\OPT^{t_{j-1}}(t_j)\le5$ holds at the ending time $t_j$ of a terminating trial.
The trial's waiting cost is therefore at most five, and it sends at most three acknowledgments, including any final one.
Thus, each trial costs at most eight.

For $0<\beta_j<1$, we show that the expected number of trials from trial $j$ onward, including trial $j$ itself, is at most $c\ln(8/\beta_j)$ for a sufficiently large fixed constant $c$.
Let $m$ be the number of packets arriving after $t_{j-1}$.
We prove this claim by induction on $m$, for every possible history at the start of a trial with $0<\beta_j<1$.
If $m=0$, the trial cannot reach a third acknowledgment.
Thus, at most one trial is run, and $1\le c\ln(8/\beta_j)$ proves the base case.

Now, let $m\ge1$ and assume that the claim holds whenever fewer than $m$ packets remain.
If trial $j$ is followed by another trial, then it must stop at its third acknowledgment and hence serves at least one of the $m$ remaining packets.
The number of packets arriving after $t_j$ is therefore strictly smaller than $m$, and hence the induction hypothesis applies at the start of trial $j+1$.
Compare the two possible threshold choices $u=1$ and $u=z$ for trial $j$, on the same remaining input and with the same starting history.
If at most one of these choices leads to another trial, then another trial occurs with probability at most $1/2$ and has $\beta_{j+1}\ge\beta_j/4$.
The induction hypothesis then bounds the expected number of trials, including the current one, by
$1+\frac{c}{2}\ln\frac{32}{\beta_j} \le c\ln\frac{8}{\beta_j}$.

Suppose both choices $u=1$ and $u=z$ lead to another trial.
Under either choice, trial $j$ ends at its third acknowledgment with budget increase below one.
Let $t_j^z$ be the time at which trial $j$ would end if $u_j=z$, and let $\Phi_1(t)$ denote the budget during trial $j$ if instead $u_j=1$.
Since $N_1^{t_{j-1}}(t)\le N_z^{t_{j-1}}(t)$, the trial with threshold one ends no earlier than $t_j^z$ and has $\Phi_1(t_j^z)-\Phi_{\mathrm{start}}<2$.
Lemma~\ref{lem:two-thresholds} implies $\OPT^{t_{j-1}}(t_j^z)\ge zN_z^{t_{j-1}}(t_j^z)=3z$.
Apply \eqref{eq:recovery-budget} to the execution with threshold one.
Together with $C_{A_{\mathrm{R}}(1)}^{t_{j-1}}(t_j^z)\le\OPT^{t_{j-1}}(t_j^z)+N_1^{t_{j-1}}(t_j^z)$, we have
\begin{align*}
 N_1^{t_{j-1}}(t_j^z)
 &\ge\beta_j+\OPT^{t_{j-1}}(t_j^z)-\bigl(\Phi_1(t_j^z)-\Phi_{\mathrm{start}}\bigr)
 >\beta_j+3z-2=1+\frac{\beta_j}{4}>1.
\end{align*}
Thus, $N_1^{t_{j-1}}(t_j^z)\ge2$.
Applying \eqref{eq:recovery-budget} to the execution with threshold $z$ gives $\beta_{j+1}\ge\beta_j+(1-z)(2\cdot2-3)=5\beta_j/4$ if $u_j=z$.
If $u_j=1$, the same inequality gives $\beta_{j+1}\ge\beta_j$.
The induction hypothesis for the two resulting histories therefore bounds the expected number of trials by
$1+\frac{c}{2}\ln\frac{8}{\beta_j} +\frac{c}{2}\ln\frac{32}{5\beta_j} \le c\ln\frac{8}{\beta_j}$,
where $(c/2)\ln(5/4)\ge1$ suffices.
This completes the induction.

The first trial uses threshold one.
If it leads to another trial, its three acknowledgments imply $\OPT^{t_0}(t_1)\ge3$, and \eqref{eq:recovery-budget} gives $\beta_2\ge3\varepsilon>0$.
The expected total number of trials is therefore $O(\log(1/\varepsilon))$, proving the cost bound.
\end{proof}

\subsection{The full algorithm}
\label{sec:competitive-guarantee}

Starting with no pending packets, $\ALG$ follows Algorithm~\ref{alg:competitive} with $\theta=2^{10}$, using Algorithm~\ref{alg:recovery} to ensure sufficient budget before each block.
For a block of size $b$ starting at $t_0$, define $t_0'=\inf\{t>t_0:\OPT^{t_0}(t)=b\}$, where $\inf\emptyset=\infty$.
The block is $(t_0,t_0']$, including all arrivals and acknowledgments at $t_0'$.
If $t_0'=\infty$, then it is the final, incomplete block $(t_0,\infty)$ and runs on all remaining input.


As budget recovery and completed blocks leave no pending packets, $\ALG$ and the fresh simulations of $A_{\mathrm{L}}$ and $A_{\mathrm{R}}(Z)$ start each block with empty pending sets.
Both simulations run throughout the block, including while $\ALG$ follows the other one.
The experts continue their executions from time zero, while the learner starts its first round at \(t_0\) and uses their subsequent cost increments as losses.
Each block uses fresh, mutually independent randomness for $Z$, $\xi$, and $A_{\mathrm{L}}$.
Write $C_{A_{\mathrm{L}}}^{t_0}(t)$ and $C_{A_{\mathrm{R}}(Z)}^{t_0}(t)$ for their respective costs since $t_0$, and define their cost difference by
$D(t)=C_{A_{\mathrm{L}}}^{t_0}(t)-C_{A_{\mathrm{R}}(Z)}^{t_0}(t)$.
These are the costs of the two simulations, not the cost of $\ALG$.

When switching between simulations, $\ALG$ synchronizes by acknowledging any pending packets; while following either simulation, it skips acknowledgments when its own pending set is empty.


\begin{algorithm2e}[htbp]
\caption{Maintaining the competitive guarantee}
\label{alg:competitive}
\KwIn{Experts $A_1,\ldots,A_k$ and parameter $0<\varepsilon<1$}
\While{true}{
 \lWhile(\tcp*[f]{Increase the budget}){$\Phi(t)<\theta^2$}{\nllabel{line:increase-budget}Run Algorithm~\ref{alg:recovery}}
 Set $t_0\gets t$ and $b\gets\Phi(t_0)/\theta$\tcp*{Beginning of the block; no packets are pending}
 Set $\eta\gets\min\{1/2,\sqrt{\ln(k+1)/b}\}$ and independently draw $Z$ uniformly from $\{2/3,1\}$\;
 Independently draw $\xi$ uniformly from $[0,b/8]$\;
 Start fresh simulations of $A_{\mathrm{L}}$ with learning rate $\eta$ and $A_{\mathrm{R}}(Z)$ at $t_0$, without restarting the experts\;
 \While{$\OPT^{t_0}(t)<b$}{
  \If{$\OPT^{t_0}(t)<b$ and $D(t)<\xi+1$}{
   Synchronize with $A_{\mathrm{L}}$ as described above and follow its simulation until $D(t)\ge\xi+1$ or $\OPT^{t_0}(t)=b$\;
  }
  \If{$\OPT^{t_0}(t)<b$ and $D(t)>\xi$}{
   Synchronize with $A_{\mathrm{R}}(Z)$ as described above and follow its simulation until $D(t)\le\xi$ or $\OPT^{t_0}(t)=b$\;
  }
 }
 Acknowledge any remaining packets\tcp*{Time $t=t_0'$; the block ends}
}
\end{algorithm2e}

Consider a complete block of size $b$ starting at $t_0$ and ending at $t_0'$, and define
$X=C_{\ALG}(t_0')-C_{\ALG}(t_0)$ and $L_i=C_{A_i}(t_0')-C_{A_i}(t_0)$ for $i\in[k]$.
Thus, $X$ includes all acknowledgments sent at switches and at the block endpoint.
The expert $A_i$ is evaluated in its execution from time zero: $L_i$ includes the waiting cost accrued during the block by packets already pending under $A_i$ at $t_0$, even if they arrived at or before $t_0$.

\begin{lemma}
\label{lem:learning-block}
Fix the input and the realizations of the given experts, and condition on the history up to the start time $t_0$ of a complete block of size $b=\Phi(t_0)/\theta$ in Algorithm~\ref{alg:competitive}.
We have $X\le16b$ for every realization and $\E[X]\le(2-1/36)b$.
Moreover, $\E[X-L_i]\le O(\sqrt{b\log(k+1)})$ for each $i\in[k]$.
Here, expectations are over the fresh random choices within the block under this conditioning.
The same bounds hold for an incomplete block.
\end{lemma}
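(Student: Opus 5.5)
We bound $X$ by splitting the block into \emph{phases}, the maximal intervals during which $\ALG$ follows one simulation. The key invariant is a subset property. After a synchronization, $\ALG$ has no pending packets. While it follows a simulation $S$, its pending set is contained in $Q_S(t)$. Hence, within a phase, $\ALG$'s waiting cost and its non-synchronization acknowledgments are bounded by the corresponding increments of $S$'s cost. This gives
\[
X\le \sum_{\text{phases}}\bigl(\text{cost increment of the followed simulation}\bigr)+(\#\text{switches})+1 ,
\]
where the final $+1$ is the acknowledgment at the block end.

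\textbf{Controlling the switches with the hysteresis.} The band $(\xi,\xi+1]$ makes $\ALG$ switch only when $D$ crosses it. A switch from $A_{\mathrm{L}}$ to $A_{\mathrm{R}}(Z)$ happens at a moment when $D$ has risen to at least $\xi+1$. The next switch back happens when $D$ has fallen to at most $\xi$. So each pair of switches requires $D$ to move by at least one.

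Two standard accounting facts make this useful. Between phases, $D$ rises only through $A_{\mathrm{L}}$'s cost and falls only through $A_{\mathrm{R}}$'s cost. Moreover, $D$ changes by jumps of at most $1$ (acknowledgments) plus continuous waiting cost.

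The cost of $A_{\mathrm{L}}$-phases in which $D$ increases is therefore charged to $A_{\mathrm{R}}$-cost in $D$-decreasing phases, up to an $O(1)$ slack per switch. This yields
\[
X\le C^{t_0}_{A_{\mathrm{R}}(Z)}(t_0')+\xi+O(1)+2(\#\text{switches}).
\]
For the number of switches, each round trip consumes at least one unit of decrease of $D$, paid for by $A_{\mathrm{R}}$ cost. Hence $\#\text{switches}\le 2C^{t_0}_{A_{\mathrm{R}}(Z)}+O(1)$.

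\textbf{Deterministic bound $X\le16b$.} By Lemma~\ref{lem:random-comparison} with $u\ge2/3$, we have $C_{A_{\mathrm{R}}(Z)}^{t_0}(t_0')\le\tfrac52 b$. Also $\xi\le b/8$, and $b\ge\theta$ is large enough to absorb the constants. A looser but safe version of the accounting above then gives $X\le 16b$. I would verify the constants carefully, since the switch count enters with factor $2$.

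\textbf{Expected bound $\E[X]\le(2-1/36)b$.} Here I would sharpen the crude accounting. The aim is to show that the number of switches is small in expectation, namely $O(1)$ plus something proportional to how often $D$ returns near the random level $\xi$. Because $\xi$ is uniform on $[0,b/8]$, the expected number of crossings of the band $[\xi,\xi+1]$ is at most roughly (total variation of $D$)$\cdot 8/b$, which is $O(1)$. Hence
\[
\E[X]\le\E\bigl[C_{A_{\mathrm{R}}(Z)}^{t_0}(t_0')\bigr]+\E[\xi]+O(1)\le\Bigl(1+\max\bigl\{\tfrac34,\tfrac56\bigr\}\Bigr)b+\tfrac{b}{16}+O(1).
\]
Here the middle term uses Lemma~\ref{lem:random-comparison} with $x=2/3$, and the $O(1)$ is absorbed using $b\ge2^{10}$. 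This total is $(1+5/6+1/16)b+O(1)$, which exceeds $2b$. So the accounting must be finer: the charge $\xi$ is paid only when $\ALG$ ends the block in $A_{\mathrm{L}}$ with $D$ near $\xi$.

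I would instead bound the block cost by $\min\{C_{A_{\mathrm{L}}},\,C_{A_{\mathrm{R}}}+\xi\}+O(\#\text{crossings})$ and then use $\E[\min]\le\E[C_{A_{\mathrm{R}}}]+\E[\xi]$ only when needed. Alternatively, I would take $\xi$ into account via
\[
\E\bigl[\min(D,\xi)\bigr]\le\E[\xi]\cdot\Pr\bigl[D\ge b/8\bigr]+\dots
\]
This calibration of constants to reach $2-1/36$ is the main obstacle.

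\textbf{Regret bound.} By the same decomposition, $X\le C^{t_0}_{A_{\mathrm{L}}}(t_0')+O(1+\#\text{crossings})$, because whenever $\ALG$ follows $A_{\mathrm{R}}$ its accumulated cost stays below $A_{\mathrm{L}}$'s cost minus $\xi$. The expected number of crossings is $O(1+\text{variation}/b)=O(1)$.

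Next, apply Lemma~\ref{lem:expected-switching-learner} on the rounds in the block. The learner's rounds are complete except possibly the last, which costs at most $O(1)$. This gives $\E[C_{A_{\mathrm{L}}}]-L_i\le4\eta L_i+4\ln k/\eta$. Since $L_i$ may be large, I would first reduce to $L_i\le X\le16b$: if $L_i>16b$ the bound $X-L_i<0$ is trivial. Then, with $\eta=\min\{1/2,\sqrt{\ln(k+1)/b}\}$, we get $O(\sqrt{b\log(k+1)})$.

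\textbf{Incomplete block.} The same arguments apply with $t_0'$ replaced by $\infty$, using $\OPT^{t_0}<b$.
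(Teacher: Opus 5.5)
Your architecture matches the paper's: the subset invariant within each phase, the hysteresis band that charges each return to $A_{\mathrm{L}}$ to at least one unit of $A_{\mathrm{R}}(Z)$ cost, and a uniform $\xi$ that makes the expected number of returns $O(1)$. You also compare $X$ both with $C^{t_0}_{A_{\mathrm{R}}(Z)}+\xi$ and with $C^{t_0}_{A_{\mathrm{L}}}$, as the paper does. As written, however, you do not prove $\E[X]\le(2-1/36)b$, and the cause is an arithmetic slip: $1+5/6+1/16=91/48\approx1.896$, which is below $71/36=2-1/36$, not above $2$. The bound $\E[X]\le\E[C^{t_0}_{A_{\mathrm{R}}(Z)}]+\E[\xi]+O(1)$ that you abandoned is exactly the paper's route, and the refinement via $\min\{C_{A_{\mathrm{L}}},C_{A_{\mathrm{R}}}+\xi\}$ is unnecessary.

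What does need care is the additive constant. The margin is $(71/36-91/48)b=11b/144$, only about $78$ at $b=2^{10}$. The paper shows $X-C^{t_0}_{A_{\mathrm{R}}(Z)}\le\xi+5\nu+4$, where $\nu$ is the number of returns to $A_{\mathrm{L}}$. Because jumps of $D$ are at most one, each $A_{\mathrm{L}}$ phase after the first starts with $D\ge\xi-1$ and ends with $D\le\xi+2$. Each return also costs two synchronizations. So your coefficient of $2$ per switch is not justified; the correct count is $5$ per return. The paper then uses $\E[\nu]\le(8/b)\E[C^{t_0}_{A_{\mathrm{R}}(Z)}]\le(8/b)(11b/6)=44/3$, giving an additive term $232/3<78$. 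If you used the worst-case $C^{t_0}_{A_{\mathrm{R}}(Z)}\le5b/2$ here instead, you would get $5\cdot20+4=104$, and the bound would fail.

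Second, your crossing estimate, (total variation of $D$)$\cdot 8/b$, is not $O(1)$. $A_{\mathrm{L}}$ follows arbitrary experts, so $C^{t_0}_{A_{\mathrm{L}}}$ can exceed $b$ by an unbounded amount, and the upward variation of $D$ is uncontrolled. Since switches alternate, it suffices to count downward crossings of the level $\xi$. The total decrease of $D$ is at most $C^{t_0}_{A_{\mathrm{R}}(Z)}$ because $C^{t_0}_{A_{\mathrm{L}}}$ is nondecreasing. This is what yields the bound on $\E[\nu]$ above, and your regret paragraph needs the same correction.

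Finally, when $b<4\ln(k+1)$, the learning rate is $\eta=1/2$, and Lemma~\ref{lem:expected-switching-learner} only gives $O(b+\ln k)$, which can exceed $\sqrt{b\log(k+1)}$. In that case, use $X\le16b\le32\sqrt{b\ln(k+1)}$ directly. Apart from this, your reduction to $L_i\le16b$ via the deterministic bound $X\le16b$ is a valid and slightly simpler substitute for the paper's split at $L_i>2b$ using $\E[X]<2b$.
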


\begin{proof}
The fixed input and starting history determine the block endpoint independently of $Z$, $\xi$, and the learner's choices.
Abbreviate $C_A^{t_0}(t_0')$ as $C_A^{t_0}$.
Let $\nu$ be the number of switches back to the learner $A_{\mathrm{L}}$.
Suppose that $\ALG$ switches to $A_{\mathrm{R}}(Z)$ at time $t$ and back to $A_{\mathrm{L}}$ at time $t'>t$.
Then, $D(t)\ge\xi+1$ and $D(t')\le\xi$.
Using the definition of $D$ and the fact that $C_{A_{\mathrm{L}}}^{t_0}(t)$ is nondecreasing, we obtain
\begin{align*}
 C_{A_{\mathrm{R}}(Z)}^{t_0}(t')-C_{A_{\mathrm{R}}(Z)}^{t_0}(t)
 &=\bigl(C_{A_{\mathrm{L}}}^{t_0}(t')-D(t')\bigr)
   -\bigl(C_{A_{\mathrm{L}}}^{t_0}(t)-D(t)\bigr)
 =C_{A_{\mathrm{L}}}^{t_0}(t')-C_{A_{\mathrm{L}}}^{t_0}(t)+D(t)-D(t')\\
 &\ge D(t)-D(t')
 \ge(\xi+1)-\xi=1.
\end{align*}
Then, summing over all such intervals $[t,t']$, we have $C_{A_{\mathrm{R}}(Z)}^{t_0} \ge \nu$.
Since $Z\in\{2/3,1\}$ and $\OPT^{t_0}(t_0')=b$, Lemma~\ref{lem:random-comparison} yields $\nu\le C_{A_{\mathrm{R}}(Z)}^{t_0}\le5b/2$ for every realization.

We next show that $\nu$ is constant in expectation. Fix the realizations of $A_{\mathrm{L}}$ and $A_{\mathrm{R}}(Z)$, leaving only the randomness of $\xi$.
Let
$t_0=s_0<s_1<\cdots<s_N=t_0'$
be the sorted sequence of all distinct packet arrival times and acknowledgment times of $A_{\mathrm{L}}$ and $A_{\mathrm{R}}(Z)$ within the block, together with the block's start and end times.
These times are independent of $\xi$.
Split $[t_0,t_0']$ at $s_1^-,s_1,\ldots,s_{N-1}^-,s_{N-1},s_N^-$ into $2N$ subintervals $I_1,\ldots,I_{2N}$, where
$I_{2i-1}=[s_{i-1},s_i^-]$ and $I_{2i}=[s_i^-,s_i]$ for $i\in[N]$.
Here, $s_i^-$ denotes the instant immediately before the events at $s_i$, i.e. $I_{2i}$ represents the jump at $s_i$.
By construction, $D(t)$ is affine on each odd-indexed subinterval $I_{2i-1}$.
Write $I_i=[u_i,v_i]$ and let
$\mathcal{D}=\{i\in[2N]:D(u_i)>D(v_i)\}$
be the index set of the subintervals on which $D$ decreases.
Every switch from $A_{\mathrm{R}}(Z)$ back to $A_{\mathrm{L}}$ requires $D$ to cross $\xi$ downwards on some $I_i$ with $i\in\mathcal{D}$, and at most one such switch can occur on each of these subintervals.
Consequently, $\nu\le\sum_{i\in\mathcal{D}}\ind\{D(v_i)\le\xi\le D(u_i)\}$.
Taking expectations over $\xi$, which is uniform on $[0,b/8]$, we have
\begin{align*}
 \E_\xi[\nu]&\le\frac{8}{b}\sum_{i\in\mathcal{D}}\bigl(D(u_i)-D(v_i)\bigr).
\end{align*}
Moreover, since $C_{A_{\mathrm{L}}}^{t_0}(t)$ is nondecreasing, for every $i\in\mathcal{D}$ we have
\begin{align*}
 D(u_i)-D(v_i)
 &=\bigl(C_{A_{\mathrm{R}}(Z)}^{t_0}(v_i)-C_{A_{\mathrm{R}}(Z)}^{t_0}(u_i)\bigr)
   -\bigl(C_{A_{\mathrm{L}}}^{t_0}(v_i)-C_{A_{\mathrm{L}}}^{t_0}(u_i)\bigr)
 \le C_{A_{\mathrm{R}}(Z)}^{t_0}(v_i)-C_{A_{\mathrm{R}}(Z)}^{t_0}(u_i).
\end{align*}
Summing over $i\in\mathcal{D}$ and using the monotonicity of $C_{A_{\mathrm{R}}(Z)}^{t_0}(t)$, we obtain
\begin{align*}
 \E_\xi[\nu]
 \le \frac{8}{b} \sum_{i\in\mathcal{D}}\bigl(D(u_i)-D(v_i)\bigr)
 &\le \frac{8}{b} \sum_{i=1}^{2N}\bigl(C_{A_{\mathrm{R}}(Z)}^{t_0}(v_i)-C_{A_{\mathrm{R}}(Z)}^{t_0}(u_i)\bigr)
 = \frac{8}{b} C_{A_{\mathrm{R}}(Z)}^{t_0}.
\end{align*}
By Lemma~\ref{lem:random-comparison} with $x=2/3$, we also have $\E[C_{A_{\mathrm{R}}(Z)}^{t_0}]\le11b/6$.
Taking expectations also over $Z$ and the random choices of $A_{\mathrm{L}}$ therefore gives
$\E[\nu]\le\frac{8}{b}\E[C_{A_{\mathrm{R}}(Z)}^{t_0}] \le\frac{8}{b}\cdot\frac{11b}{6}=\frac{44}{3}$.

We next upper bound $X-C_{A_{\mathrm{R}}(Z)}^{t_0}$ and $X-C_{A_{\mathrm{L}}}^{t_0}$ in terms of $\nu$ and $\xi$. Let
$t_0=r_0<r_1<\cdots<r_M=t_0'$,
where $r_1,\ldots,r_{M-1}$ are the switching times within the block, i.e.,
on the interval $(r_{j-1},r_j]$, $\ALG$ follows $A_{\mathrm{L}}$ if $j$ is odd and $A_{\mathrm{R}}(Z)$ if $j$ is even.
Since $\nu$ counts the switches back to $A_{\mathrm{L}}$, there are $\nu+1$ odd-indexed intervals and at most $\nu+1$ even-indexed intervals; in particular, $M\le2\nu+2$.
For $j\in[M]$ and $A\in\{A_{\mathrm{L}},A_{\mathrm{R}}(Z)\}$, write $\Delta_j C_A=C_A^{t_0}(r_j)-C_A^{t_0}(r_{j-1})$.

Each of the $M-1$ switches and the block endpoint requires at most one additional acknowledgment. Thus,
\begin{align*}
 X
 &\le\sum_{\substack{j\in[M]\\j\text{ odd}}}\Delta_j C_{A_{\mathrm{L}}}
    +\sum_{\substack{j\in[M]\\j\text{ even}}}\Delta_j C_{A_{\mathrm{R}}(Z)}
    +(M-1)+1
 \le\sum_{\substack{j\in[M]\\j\text{ odd}}}\Delta_j C_{A_{\mathrm{L}}}
    +\sum_{\substack{j\in[M]\\j\text{ even}}}\Delta_j C_{A_{\mathrm{R}}(Z)}
    +2\nu+2.
\end{align*}
For either simulation $A$, we have $C_A^{t_0}=\sum_{j=1}^M\Delta_j C_A$.
By the definition of $D$, this gives
\begin{align*}
 X-C_{A_{\mathrm{R}}(Z)}^{t_0}
 &\le\sum_{\substack{j\in[M]\\j\text{ odd}}}
       \bigl(\Delta_j C_{A_{\mathrm{L}}}-\Delta_j C_{A_{\mathrm{R}}(Z)}\bigr)+2\nu+2
 =\sum_{\substack{j\in[M]\\j\text{ odd}}}
       \bigl(D(r_j)-D(r_{j-1})\bigr)+2\nu+2,\\
 X-C_{A_{\mathrm{L}}}^{t_0}
 &\le\sum_{\substack{j\in[M]\\j\text{ even}}}
       \bigl(\Delta_j C_{A_{\mathrm{R}}(Z)}-\Delta_j C_{A_{\mathrm{L}}}\bigr)+2\nu+2
 =\sum_{\substack{j\in[M]\\j\text{ even}}}
       \bigl(D(r_{j-1})-D(r_j)\bigr)+2\nu+2.
\end{align*}

It remains to bound the increments of $D$ appearing in these sums.
Each jump of $D(t)$ has size at most one, since waiting costs are continuous and each simulation pays one per acknowledgment.
The switching thresholds $\xi+1$ and $\xi$ therefore imply that $D(r_j)\le\xi+2$ for odd $j\in[M]$ and $D(r_j)\ge\xi-1$ for even $j\in[M]$.
Since $D(r_0)=0$, we obtain
\begin{align*}
 D(r_1)-D(r_0)&\le\xi+2,\\
 D(r_j)-D(r_{j-1})&\le(\xi+2)-(\xi-1)=3
 &&(3\le j\le M,\ j\text{ odd}),\\
 D(r_{j-1})-D(r_j)&\le(\xi+2)-(\xi-1)=3
 &&(j\in[M],\ j\text{ even}).
\end{align*}
There are $\nu$ odd-indexed intervals after the initial interval and at most $\nu+1$ even-indexed intervals.
Substituting these increment bounds into the two sums above yields
\begin{align*}
 X-C_{A_{\mathrm{R}}(Z)}^{t_0}
 &\le(\xi+2)+3\nu+(2\nu+2)=\xi+5\nu+4,\\
 X-C_{A_{\mathrm{L}}}^{t_0}
 &\le3(\nu+1)+(2\nu+2)=5\nu+5.
\end{align*}

Using the bounds above and $\xi\le b/8$, we obtain
$X\le C_{A_{\mathrm{R}}(Z)}^{t_0}+\xi+5\nu+4 \le15b+\frac{b}{8}+4\le16b$.
Averaging over $\xi$, $Z$, and the learner's choices, with $\E[\xi]=b/16$, gives
\begin{align*}
 \E[X]&\le\E[C_{A_{\mathrm{R}}(Z)}^{t_0}]+\E[\xi]+5\E[\nu]+4
 \le\frac{11b}{6}+\frac{b}{16}+\frac{220}{3}+4
 =\frac{91b}{48}+\frac{232}{3}\le\left(2-\frac1{36}\right)b,
\end{align*}
using $b\ge\theta=2^{10}$.
Using $\E[\nu]\le44/3$, we obtain
$\E[X-C_{A_{\mathrm{L}}}^{t_0}]\le5\E[\nu]+5 \le\frac{220}{3}+5=O(1)$.

For an expert $A_i$ with $L_i>2b$, the regret bound follows from $\E[X]<2b$.
For $L_i\le2b$, apply the proof of Lemma~\ref{lem:expected-switching-learner} to the expert cost increments within the block.
Combining this with the preceding estimate gives $\E[X-L_i]\le O(\eta b+\log k/\eta+1)$.
For $b\ge4\ln(k+1)$, substituting $\eta=\sqrt{\ln(k+1)/b}$ gives $\E[X-L_i]\le O(\sqrt{b\log(k+1)})$.
For $b<4\ln(k+1)$, the bound $X\le16b$ gives $\E[X-L_i]\le16b=O(\sqrt{b\log(k+1)})$.

For an incomplete block, append sufficiently late dummy packets to complete it and apply the same arguments to the portion preceding the dummy arrivals.
\end{proof}

We are now ready to prove Theorem~\ref{thm:simultaneous-guarantees}.

\begin{proofof}{Theorem~\ref{thm:simultaneous-guarantees}}
We first establish the competitive guarantee, then bound the expected regret.

\paragraph{Competitive guarantee.}
We show that the budget $\Phi(t)$ remains nonnegative.
Algorithm~\ref{alg:recovery} keeps the budget at least its starting value and leaves no pending packets when it returns.
By Lemma~\ref{lem:learning-block}, the cost incurred by Algorithm~\ref{alg:competitive} during a block starting at time $t_0$ is at most $16b=16\Phi(t_0)/\theta\le\Phi(t_0)/2$.
Thus, $C_{\ALG}(t)-C_{\ALG}(t_0)\le\Phi(t_0)/2$ throughout the block.
By the definition $\Phi(t)=(2+\varepsilon)\OPT(t)-C_{\ALG}(t)$ and the monotonicity of $\OPT(t)$, we have
\begin{align*}
 \Phi(t)&=\Phi(t_0)+(2+\varepsilon)\bigl(\OPT(t)-\OPT(t_0)\bigr)-\bigl(C_{\ALG}(t)-C_{\ALG}(t_0)\bigr)
 \ge\Phi(t_0)-\frac{\Phi(t_0)}2=\frac{\Phi(t_0)}2.
\end{align*}
This proves the competitive guarantee.

\paragraph{Regret bound.}
We sum the contributions from the blocks and Algorithm~\ref{alg:recovery}.
Fix the input and the realizations of the given experts.
Before the first block, line~\ref{line:increase-budget} of Algorithm~\ref{alg:competitive} repeatedly starts Algorithm~\ref{alg:recovery} until $\Phi(t)\ge\theta^2$.
Each execution that returns increases the budget by at least one by Lemma~\ref{lem:recovery}.
Starting from budget zero, this requires at most $\theta^2$ executions, including a possible final one that never returns to Algorithm~\ref{alg:competitive}.
Lemma~\ref{lem:recovery} also bounds the total expected cost before the first block by $O(\log(1/\varepsilon))$, since $\theta$ is a fixed constant.
If the first block starts, the preceding budget bound gives $\theta^2/2\le\Phi(t)\le(2+\varepsilon)\OPT$ at all subsequent times.

We use the expected increase in $\sqrt{\Phi}$ to sum the block regret bounds, and the expected decrease in $1/\Phi$ to bound how often Algorithm~\ref{alg:recovery} is run.
Consider a completed block from time $t_0$ to $t_0'$ with size $b=\Phi(t_0)/\theta$ and cost $X$, and condition on its starting history.
By \eqref{eq:partial-superadditive}, $\Phi(t_0')\ge\Phi(t_0)+(2+\varepsilon)b-X=\Phi(t_0)(1+u)$, where $u=(2+\varepsilon-X/b)/\theta$.
By Lemma~\ref{lem:learning-block}, $\E[X]\le(2-1/36)b$ and $0\le X\le16b$.
Thus, $\E[u]\ge(\varepsilon+1/36)/\theta$ and $|u|\le14/\theta<1/64$.
To bound the second moment, we use $X^2\le16bX$ and obtain
\begin{align*}
 \E[u^2]&\le\frac{(2+\varepsilon)^2+(12-2\varepsilon)\E[X]/b}{\theta^2}
 \le\frac{(2+\varepsilon)^2+2(12-2\varepsilon)}{\theta^2}
 =\frac{28+\varepsilon^2}{\theta^2}.
\end{align*}
Since $|u|\le1/64$, we have $\sqrt{1+u}\ge1+u/2-u^2/4$ and $(1+u)^{-1}=1-u+u^2/(1+u)\le1-u+(64/63)u^2$.
These inequalities give
\begin{align}
 \E\left[\sqrt{\Phi(t_0')}-\sqrt{\Phi(t_0)}\right]
 &\ge\sqrt{\Phi(t_0)}\left(\frac{1}{2}\cdot\frac{\varepsilon+1/36}{\theta}-\frac{1}{4}\cdot\frac{28+\varepsilon^2}{\theta^2}\right)
 \ge\frac{\sqrt{\Phi(t_0)}}{256\theta},\label{eq:budget-sqrt}\\
 \E\left[\frac{1}{\Phi(t_0')}\right]
 &\le\frac{1}{\Phi(t_0)}\left(1-\frac{\varepsilon+1/36}{\theta}+\frac{64}{63}\cdot\frac{28+\varepsilon^2}{\theta^2}\right)
 \le\frac{1}{\Phi(t_0)},\label{eq:budget-reciprocal}
\end{align}
where we used $\theta=2^{10}$ and $0<\varepsilon<1$.

We next bound the sum of the square roots of the block sizes.
Let $h$ be the number of completed blocks, and let the $j$th completed block start at $t_j$, end at $t_j'$, and have size $b_j=\Phi(t_j)/\theta$.
Lemma~\ref{lem:recovery} implies $\Phi(t_{j+1})\ge\Phi(t_j')$ for $j<h$.
Thus, for $h\ge1$,
\begin{align*}
 \sum_{j=1}^{h}\bigl(\sqrt{\Phi(t_j')}-\sqrt{\Phi(t_j)}\bigr)
 &=\sqrt{\Phi(t_h')}-\sqrt{\Phi(t_1)}
   -\sum_{j=1}^{h-1}\bigl(\sqrt{\Phi(t_{j+1})}-\sqrt{\Phi(t_j')}\bigr)\\
 &\le\sqrt{\Phi(t_h')}-\sqrt{\Phi(t_1)}
 \le\sqrt{(2+\varepsilon)\OPT}.
\end{align*}
Since $h\theta\le\sum_{j=1}^h b_j\le\OPT$ by \eqref{eq:partial-superadditive}, the number of completed blocks is bounded for the fixed input.
Whether a block completes is determined by its starting history.
Thus, taking expectations over the starting histories and summing \eqref{eq:budget-sqrt} yields
\begin{align}
 \E\left[\sum_{j=1}^{h}\sqrt{b_j}\right]
 &=\frac1{\sqrt{\theta}}\,\E\left[\sum_{j=1}^{h}\sqrt{\Phi(t_j)}\right]
 \le256\sqrt{\theta}\,\E\left[\sum_{j=1}^{h}\bigl(\sqrt{\Phi(t_j')}-\sqrt{\Phi(t_j)}\bigr)\right]\notag\\
 &\le256\sqrt{\theta(2+\varepsilon)\OPT}=O(\sqrt{\OPT}),
 \label{eq:block-size-sum}
\end{align}
where the first equality uses $b_j=\Phi(t_j)/\theta$, the first inequality follows from \eqref{eq:budget-sqrt}, and the second inequality uses the preceding displayed bound.

Let $R$ be the total number of executions of Algorithm~\ref{alg:recovery} started over the entire input; we next show that $\E[R]=O(1)$.
For $h\ge1$, let $t_{h+1}$ be the time of the last return from Algorithm~\ref{alg:recovery} after $t_h'$, or $t_h'$ if there is no such return.
For each $j\in[h]$, let $R_j$ be the number of executions of Algorithm~\ref{alg:recovery} that return during $(t_j',t_{j+1}]$.
Since there are at most $\theta^2$ executions before the first block and at most one execution after it that never returns, we have $R\le\theta^2+\sum_{j=1}^h R_j+1$.
It remains to bound $\E[\sum_{j=1}^h R_j]$.
Each execution counted by $R_j$ starts with a budget $p\in(0,\theta^2)$ and returns with a budget at least $p+1$ by Lemma~\ref{lem:recovery}.
It therefore decreases $1/\Phi$ by at least
$\frac1p-\frac1{p+1} =\frac1{p(p+1)}\ge\frac1{\theta^2(\theta^2+1)}$.
By adding these decreases over the intervals $(t_j',t_{j+1}]$, for $h\ge1$, we have
\begin{align*}
 \sum_{j=1}^{h}\frac{R_j}{\theta^2(\theta^2+1)}
 &\le\sum_{j=1}^{h}\left(\frac1{\Phi(t_j')}-\frac1{\Phi(t_{j+1})}\right)
 =\frac1{\Phi(t_1)}-\frac1{\Phi(t_{h+1})}
   +\sum_{j=1}^{h}\left(\frac1{\Phi(t_j')}-\frac1{\Phi(t_j)}\right)\\
 &\le\frac1{\theta^2}
   +\sum_{j=1}^{h}\left(\frac1{\Phi(t_j')}-\frac1{\Phi(t_j)}\right),
\end{align*}
where the last inequality uses $\Phi(t_1)\ge\theta^2$ and $1/\Phi(t_{h+1})\ge0$.
Taking expectations and applying \eqref{eq:budget-reciprocal} to each completed block yields
\begin{align*}
 \frac{\E[\sum_{j=1}^h R_j]}{\theta^2(\theta^2+1)}
 &\le\frac1{\theta^2}
   +\E\left[\sum_{j=1}^{h}\left(\frac1{\Phi(t_j')}-\frac1{\Phi(t_j)}\right)\right]
 \le\frac1{\theta^2}.
\end{align*}
Thus, $\E[\sum_{j=1}^h R_j]\le\theta^2+1$, and hence $\E[R]\le2\theta^2+2=O(1)$.

Lemma~\ref{lem:recovery} bounds the expected cost of each of the $R$ executions, conditional on its starting history, by $O(\log(1/\varepsilon))$.
Taking expectations over these histories and summing over the executions gives the total expected cost of these $R$ executions as $O\left(\left(1+\log(1/\varepsilon)\right)\E[R]\right)=O\left(1+\log(1/\varepsilon)\right)$.

Choose an expert $A_{i^*}$ with minimum total cost under the fixed realizations.
For each completed block $j\in[h]$, let $X_j$ be the cost incurred by Algorithm~\ref{alg:competitive} and $L_{i^*,j}$ the cost incurred by $A_{i^*}$ during that block.
Summing the bounds from Lemma~\ref{lem:learning-block} gives
\begin{align*}
 \E\left[\sum_{j=1}^{h}(X_j-L_{i^*,j})\right]
 &\le O\left(\E\left[\sum_{j=1}^{h}\sqrt{b_j\log(k+1)}\right]\right)
 =O\left(\sqrt{\log(k+1)}\cdot\E\left[\sum_{j=1}^{h}\sqrt{b_j}\right]\right)\\
 &=O\bigl(\sqrt{\OPT\log(k+1)}\bigr),
\end{align*}
where the last equality uses \eqref{eq:block-size-sum}.
If there is a final incomplete block, its size is at most $(2+\varepsilon)\OPT/\theta$, and Lemma~\ref{lem:learning-block} bounds its expected regret by $O\bigl(\sqrt{\OPT\log(k+1)}\bigr)$ as well.
Adding the costs of Algorithm~\ref{alg:recovery} therefore yields $\E[C_{\ALG}-C_{A_{i^*}}]\le O\bigl(\sqrt{\OPT\log(k+1)}+1+\log(1/\varepsilon)\bigr)$.
For a nonempty input, $\OPT\ge1$ absorbs the constant into the square-root term, and an empty input has zero cost.
Since $C_{A_{i^*}}=\min_{i\in[k]}C_{A_i}$ under the fixed realizations, averaging over the expert realizations proves the regret bound.
\end{proofof}

Finally, we remark that Algorithm~\ref{alg:competitive} uses a polynomial number of arithmetic operations with respect to $k$ and $n$. 
This is because $\OPT(t)$ can be computed by dynamic programming and the number of events (e.g., packet arrivals, switches, block endpoints, rounds, and acknowledgments of experts) requiring explicit processing is bounded by a polynomial of $k$ and $n$.
In particular, when $\ALG$ has no pending packets, computations can be deferred until the next packet arrival, at which point only the necessary part of the execution is computed.

\section{Lower bounds}
\label{sec:lower-bounds}

We establish two lower bounds that together match the dependence on $\OPT$, $k$, and $\varepsilon$ in Theorem~\ref{thm:simultaneous-guarantees} for $k\ge2$.
The first quantifies the regret required by a competitive ratio close to two, while the second establishes the dependence on the number of experts without imposing a competitive requirement.

\subsection{Dependence on \texorpdfstring{$\varepsilon$}{epsilon}}

Let $a_i=3(i-1)^2$ be the arrival time of packet $i$ for $i\ge1$, and let $\sigma_n$ consist of the first $n$ packets.
On these inputs, a competitive ratio close to two forces substantial waiting on the initial packets.
Under the same guarantee with competitive ratio two, the algorithm incurs regret $\Omega(\OPT)$.

\begin{theorem}
\label{thm:exact-lower}
For all $0< \varepsilon\le 1/16$, every randomized online algorithm $\ALG$ satisfying $C_{\ALG}\le(2+\varepsilon)\OPT$ on every input and every realization satisfies, for every $n\ge1$ and every realization,
\begin{align}
 C_{\ALG}(\sigma_n)-\OPT(\sigma_n)
 &\ge\Omega\left(\min\{\OPT(\sigma_n),\,\log(1/\varepsilon)\}\right).
 \label{eq:epsilon-lower}
\end{align}
For $\varepsilon=0$, the same competitive requirement forces regret $\Omega(\OPT(\sigma_n))$ for every $n\ge1$ and every realization.
\end{theorem}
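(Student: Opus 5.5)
The plan is to exploit that the competitive requirement holds on \emph{every} input, while the regret is measured only on the sparse inputs $\sigma_n$. First I record the structure of $\sigma_n$. The gaps $a_{i+1}-a_i=3(2i-1)\ge3$ exceed any sensible waiting time, so serving two packets by one acknowledgment costs at least $3>1$. Hence $\OPT(\sigma_n)=n$, attained by acknowledging each packet on arrival.

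Fix a realization. Let $w_i$ be the delay with which $\ALG$ acknowledges packet $i$ on the $\sigma$-prefix. Because of online-ness this is the same for every $\sigma_n$ with $n\ge i$ and for every input agreeing with $\sigma$ up to that acknowledgment. If some packet $i$ is kept pending past $a_{i+1}$, its cost is already at least $3$, which only helps. Thus essentially $C_{\ALG}(\sigma_n)-\OPT(\sigma_n)\ge\sum_{i\le n}\min\{w_i,1\}$, and it suffices to lower bound the sum of the delays. Acknowledging immediately is optimal on $\sigma_n$, so the whole difficulty is to show that the competitive requirement forbids short delays unless $\ALG$ has accumulated slack. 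I track the realized budget after packet $i$, namely $B_i=(2+\varepsilon)i-\sum_{j\le i}(1+w_j)=\sum_{j\le i}(1+\varepsilon-w_j)$, analogous to $\Phi$ in Section~\ref{sec:recovery}.

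The key step, and the main obstacle, is a \emph{deviation inequality}: a short delay $w_i<1$ is only affordable if $B_{i-1}$ is already large. To prove it, I build an alternative input that agrees with $\sigma$ through time $a_i+w_i$, so $\ALG$'s behaviour is unchanged up to there. Immediately after $\ALG$'s acknowledgment of packet $i$, I insert a new packet. I then continue with the classical adversary of \citet{dooly-goldman-scott} that releases a packet right after each acknowledgment of $\ALG$. On such a chain, any online algorithm pays asymptotically $2$ per unit of $\OPT$. The packet $i$ acknowledged early, together with the follow-up packet, can be served by the offline schedule at cost about $1+w_i$ instead of $2$. This yields a loss of order $1-w_i$ that the continuation cannot recover except through the $\varepsilon$-slack. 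I would try to turn this into an inequality of the form $B_{i-1}+O(\varepsilon)\ge c\,(1-w_i)$, or more likely a multiplicative version $1-w_i\le C\,(B_{i-1}+\varepsilon)$ for absolute constants. Making the continuation analysis tight, and correctly accounting for the $\varepsilon$ earned along the chain, is where I expect the real work to lie. If a single chain is too lossy, I would use a chain of $m$ packets spaced so that $\OPT$ of the continuation is exactly computable.

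Given the deviation inequality, the rest is a recursion. Since $B_i=B_{i-1}+\varepsilon+(1-w_i)\le(1+C)B_{i-1}+(1+C)\varepsilon$ and $B_0=0$, the budget can grow at most geometrically from scale $\varepsilon$. Consequently $B_i\le(1+C)^i\cdot O(\varepsilon)$, and the deviation inequality forces $w_i\ge1-O((1+C)^i\varepsilon)\ge1/2$ for all $i\le c'\log(1/\varepsilon)$. Summing gives $\sum_{i\le n}w_i\ge\Omega(\min\{n,\log(1/\varepsilon)\})$, which is \eqref{eq:epsilon-lower} since $\OPT(\sigma_n)=n$. The restriction $\varepsilon\le1/16$ leaves room for these constants.

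For $\varepsilon=0$, the same recursion has no additive source term, so $B_i$ stays $0$. Indeed, the deviation inequality with $B_{i-1}=0$ forces $w_i\ge1$ for every $i$, i.e., $\ALG$ must behave like the greedy algorithm on $\sigma$. This gives $C_{\ALG}(\sigma_n)-\OPT(\sigma_n)\ge n=\Omega(\OPT(\sigma_n))$ for every $n$ and every realization.
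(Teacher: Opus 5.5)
\textbf{There is a genuine gap: the adversary you propose cannot prove your deviation inequality.} Your outer structure matches the paper's. It also tracks the realized budget on $\sigma$, shows that a short delay is affordable only with accumulated budget, and solves a doubling recursion. In your notation, the paper's key step amounts to $B_{j-1}+(1+\varepsilon)(1+w_j)\ge2-\varepsilon$, i.e.\ $1-w_j\le B_{j-1}+3\varepsilon$. Your recursion then yields exactly $1-w_i\le\varepsilon(2^{i+1}-1)$.

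The trouble is that releasing one packet just after each acknowledgment only gives the classical $C_{\ALG}\ge2\OPT-O(1)$. The early acknowledgment of packet $i$ is not an unrecoverable loss, because $\ALG$ can mirror it once. Concretely, take $i=1$ (so $B_0=0$) and $w_1=1/2$. Let $\ALG$ acknowledge the first added packet after delay $1/2$ and every later one after delay $1$, so arrivals are at $0$, $1/2+\delta$, $1+2\delta$, $2+3\delta,\dots$. For the prefix ending with the $r$th added packet, the offline optimum is $r+1/2+\delta$: merging the first two packets saves $1/2-\delta$, no grouping saves more, and every later gap exceeds one. Meanwhile $\ALG$ pays $2r+1$. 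Hence $C_{\ALG}\le2\OPT$ on every prefix. Your continuation therefore forces neither $w_1\ge1-C\varepsilon$ nor even the $\varepsilon=0$ claim $w_1\ge1$. Your fallback, a ``chain of $m$ packets'' that still releases single packets, has the same defect: along such chains $\ALG$ keeps its ratio near $2$ and earns $\varepsilon$ per unit of $\OPT$.

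The missing idea is to release \emph{bursts} of $M_\ell$ simultaneous packets, each just after $\ALG$ acknowledges the previous burst. Compare against the single offline schedule that serves packets $1,\dots,j-1$ on arrival and holds packet $j$ and all bursts for one final acknowledgment. Choose $M_\ell$ huge relative to the number $P_\ell$ of packets that schedule already holds. Suppose the allowance (namely $(2+\varepsilon)$ times this schedule's cost, minus $\ALG$'s cost so far) is at most $1+c$ with $c<1/(1+\varepsilon)$. Then competitiveness on the input ending at that burst caps $\ALG$'s waiting cost $w_\ell$ on the burst at $c$, so $\ALG$ must acknowledge within $c/M_\ell$. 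The offline cost then grows only by $w_\ell+O(P_\ell/M_\ell+P_{\ell+1}/M_{\ell+1}^2)$, while $\ALG$ pays $1+w_\ell$. The resulting gain of about $(2+\varepsilon)w_\ell$ in allowance falls short of $1+w_\ell$ because $w_\ell\le c<1/(1+\varepsilon)$. So every burst lowers the allowance by at least $\bigl(1-(1+\varepsilon)c\bigr)/2>0$, and after finitely many bursts the guarantee fails. This is what turns a budget below $2-\varepsilon$ right after a short delay into a contradiction. With it, the rest of your argument, including the $\varepsilon=0$ case, goes through.
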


\begin{proof}
Since serving multiple packets with one acknowledgment adds at least three units of waiting cost per acknowledgment saved, $\OPT(\sigma_n)=n$.
For $0\le\varepsilon<1$, the guarantee on the prefix $\sigma_i$ gives $C_{\ALG}(\sigma_i)\le(2+\varepsilon)i<3i\le a_{i+1}-a_i$ for every realization of the random choices, requiring packet $i$ to be acknowledged before $a_{i+1}$.
Thus, each acknowledgment serves exactly one packet for every realization.
For $i\in[n]$, let $\Delta_i$ be the random variable giving the time from the arrival of packet $i$ until $\ALG$ acknowledges it on the fixed input $\sigma_n$.
The regret of $\ALG$ is therefore $C_{\ALG}(\sigma_n)-n=\sum_{i=1}^n\Delta_i$ for every realization.

We show by contradiction that $\Delta_i\ge1-\varepsilon(2^{i+1}-1)$ for every $i\in[n]$ and every realization.
Suppose instead that there is a realization of $\ALG$'s internal randomness for which $\Delta_i<1-\varepsilon(2^{i+1}-1)$ for some $i\in[n]$.
Fix such a realization, and let $j$ be the first index violating the bound.
Throughout the following contradiction argument, $\Delta_i$ denotes its value under this realization, and $\ALG$ is deterministic and retains the competitive guarantee on every input.
Starting with $\sigma_j$, we add packets to construct an input that violates the competitive guarantee for the fixed realization.

First, the choice of $j$ implies
\begin{align*}&
 (2+\varepsilon)(j+\Delta_j)-\left(j+\sum_{i=1}^j\Delta_i\right)
 =(1+\varepsilon)(j+\Delta_j)-\sum_{i=1}^{j-1}\Delta_i\\&
 <(1+\varepsilon)\bigl(j+1-\varepsilon(2^{j+1}-1)\bigr)-j+1+\varepsilon(2^{j+1}-j-3)
 =2-\varepsilon-\varepsilon^2(2^{j+1}-1)\le2-\varepsilon.
\end{align*}
Since $2-\varepsilon\le1+1/(1+\varepsilon)$, choose $c\in[0,1/(1+\varepsilon))$ such that the left-hand side is strictly less than $1+c$.
We keep this $c$ fixed throughout the construction.

At the $\ell$th added arrival, release $M_{\ell}$ packets simultaneously at time $t_{\ell}$.
The first such arrival is at $t_1=a_j+\Delta_j+1/M_1^2$; for $\ell\ge1$, the next arrival is $1/M_{\ell+1}^2$ time units after $\ALG$ acknowledges the $M_{\ell}$ packets.
Let $P_\ell=1+\sum_{\ell'=1}^{\ell-1}M_{\ell'}$ count packet $j$ and all packets added before $t_{\ell}$.
Choose the integers $M_{\ell}\ge2$ successively so that
\begin{align*}
 (2+\varepsilon)\frac{P_{\ell}c}{M_{\ell}}\le\frac{1-(1+\varepsilon)c}{4}
\qquad\text{and}\qquad
(2+\varepsilon)\frac{P_{\ell}}{M_{\ell}^2}\le\frac{1-(1+\varepsilon)c}{4}.
\end{align*}
These conditions are possible because $P_{\ell}$ is determined by the earlier choices and $1-(1+\varepsilon)c>0$.
For $M_1$, also require
\begin{align*}
 (2+\varepsilon)(j+\Delta_j+1/M_1^2)
 -\left(j+\sum_{i=1}^j\Delta_i\right)\le1+c,
\end{align*}
which is possible by the strict inequality used to choose $c$.

For each $\ell$, consider the input consisting of $\sigma_j$ and the first $\ell$ added arrivals, with no packets arriving after $t_{\ell}$.
Consider an offline schedule that acknowledges packets $1,\ldots,j-1$ immediately upon arrival, and acknowledges packet $j$ and all the added packets together at $t_{\ell}$.
Let $S_{\ell}$ be the cost of this offline schedule; since it is feasible, $\OPT\le S_{\ell}$ for this input.
At time $t_{\ell}$, before any acknowledgment at that time, consider the difference between $(2+\varepsilon)S_{\ell}$ and the cost $\ALG$ has incurred so far.
Since $S_1=j+\Delta_j+1/M_1^2$, the choice of $M_1$ makes this difference at most $1+c$ at the first added arrival.
If the input ended there, competitiveness would require $\ALG$ to serve the $M_1$ packets within this remaining allowance.
Since acknowledgment costs one, their total waiting cost is at most $c$.

We now show that the same bound holds at each added arrival, while the difference decreases by a fixed positive amount.
Suppose the $M_{\ell}$ packets have total waiting cost $w_{\ell}\le c$.
Then, $\ALG$ acknowledges them $w_{\ell}/M_{\ell}$ time units after $t_{\ell}$. Hence, $t_{\ell+1}-t_{\ell}=\frac{w_{\ell}}{M_{\ell}}+\frac{1}{M_{\ell+1}^2}$.
Moving this offline schedule's final acknowledgment from $t_{\ell}$ to $t_{\ell+1}$ makes all $P_{\ell}+M_{\ell}$ packets already present wait this much longer, while the newly arriving packets incur no waiting cost.
Thus,
\begin{align*}
 S_{\ell+1}-S_{\ell}
 &=(P_{\ell}+M_{\ell})\left(\frac{w_{\ell}}{M_{\ell}}+\frac{1}{M_{\ell+1}^2}\right)
 =w_{\ell}+\frac{P_{\ell}w_{\ell}}{M_{\ell}}+\frac{P_{\ell}+M_{\ell}}{M_{\ell+1}^2}.
\end{align*}
The first term is the waiting cost of the $M_{\ell}$ packets under $\ALG$.
The second accounts for the earlier packets waiting until $\ALG$ acknowledges those packets, and the third for the short delay until the next arrival.
Since $P_{\ell+1}=P_{\ell}+M_{\ell}$ and $w_{\ell}\le c$, the choices of $M_{\ell}$ and $M_{\ell+1}$ bound $(2+\varepsilon)$ times the last two terms by $[1-(1+\varepsilon)c]/2$.
Meanwhile, $\ALG$ pays $1+w_\ell$ between the two arrivals.
Consequently,
\begin{align*}
 1+w_{\ell}-(2+\varepsilon)(S_{\ell+1}-S_{\ell})
 &\ge1-(1+\varepsilon)w_{\ell}-\frac{1-(1+\varepsilon)c}{2}
 \ge\frac{1-(1+\varepsilon)c}{2}>0.
\end{align*}
Thus, at $t_{\ell+1}$, the remaining cost allowance $(2+\varepsilon)S_{\ell+1}$ minus the cost already incurred by $\ALG$ is still at most $1+c$.
Applying the competitive guarantee to the input ending at $t_{\ell+1}$ again bounds the total waiting cost of the newly arriving packets by $c$, completing the induction.
After finitely many added arrivals, the difference becomes negative: $\ALG$ has already paid more than $(2+\varepsilon)S_{\ell}\ge(2+\varepsilon)\OPT$ for the input ending at that arrival.
This contradicts the competitive guarantee and proves the bound on $\Delta_i$ for every realization on $\sigma_n$.

For $0<\varepsilon\le1/16$, take $m=\min\{n,\lfloor\log_2(1/(4\varepsilon))\rfloor\}$.
Then $\Delta_i\ge1/2$ for every $i\le m$, giving regret at least $m/2=\Omega(\min\{n,\log(1/\varepsilon)\})$.
For $\varepsilon=0$, every $\Delta_i\ge1$, and the competitive guarantee forces $C_{\ALG}(\sigma_n)=2n$.
Thus, the regret is $n=\OPT(\sigma_n)$, giving the claimed $\Omega(\OPT(\sigma_n))$ bound.
\end{proof}

\subsection{Dependence on the number of experts}
\label{sec:regret-lower-bound}

To establish a regret lower bound in terms of the number of experts, we construct a family of $\Theta(k)$ experts.

\begin{theorem}
\label{thm:sqrt-regret-lower}
For every $k\ge2$, there is a family $\mathcal{E}_k$ of at most $k$ deterministic online experts containing $A_{\mathrm{R}}(0)$ such that, for every randomized online algorithm $\ALG$ and every integer $h\ge1$, some fixed input with $h\le\OPT\le3h/2$ satisfies
\begin{align}
 \E[C_{\ALG}]-\min_{A\in\mathcal{E}_k}C_A
 &\ge\Omega\left(\min\{\OPT,\sqrt{\OPT\log k}\}\right).
 \label{eq:experts-lower}
\end{align}
\end{theorem}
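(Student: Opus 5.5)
We reduce to the classical lower bound for prediction with expert advice, $\Omega(\sqrt{T\log k})$ over $T$ rounds with losses in $\{0,1\}$. The input is built from $T\approx h$ independent \emph{gadgets}, separated by long idle gaps so that nothing carries over between them. Each gadget has two possible versions, and each version has optimal cost between $1$ and $3/2$. Summing over gadgets gives $h\le\OPT\le3h/2$.

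The gadget we plan to use has two short phases. At the gadget start one packet arrives, and an expert decides either to acknowledge immediately or to wait a fixed time $\delta$ (say $\delta=1/2$). In the first version, a second packet arrives exactly at time $\delta$; waiting then serves both with one acknowledgment at total cost $1+\delta$, while acknowledging immediately costs $2$. In the second version nothing else arrives; immediate acknowledgment costs $1$ and waiting costs $1+\delta$. In either version the better of the two actions has an advantage of $\delta'=\min\{\delta,1-\delta\}=\Theta(1)$. If the online algorithm has not acknowledged by time $\delta$, it can behave at least as well as the waiting expert from then on, so its action in the gadget is effectively a binary choice made before the version is revealed. Any other behavior (randomized acknowledgment times, partial waits) costs at least the convex combination of the two pure choices up to constants. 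This claim needs a careful but routine case check: acknowledging at some time $s\in(0,\delta)$ is dominated by one of the two pure actions.

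Next we define the experts. We take $k'=\Theta(k)$ experts indexed by binary strings, or by the $\lfloor\log_2 k\rfloor$-bit patterns used in the standard construction. Each expert is a deterministic online rule that, in gadget $g$, takes action ``wait'' or ``immediate'' according to a fixed bit $\pi_i(g)$. We choose the $\pi_i$ to be i.i.d.\ uniform bits. Expert $A_{\mathrm{R}}(0)$ is included as the all-``immediate'' expert. Since the experts are online, they can count gadgets to know $g$; the long gaps make gadget boundaries unambiguous.

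For the adversary, we draw each gadget's version uniformly and independently. This is an oblivious random input, so by Yao's principle it suffices to lower-bound the regret of a deterministic online $\ALG$ against this distribution and then fix a good realization. Conditioned on the past, the algorithm's choice in gadget $g$ is correct with probability exactly $1/2$, so $\E[C_{\ALG}]$ equals the expected cost of a uniformly random action summed over gadgets. Each expert's cost is $\sum_g(\text{base}+\delta' Y_{i,g})$, where the $Y_{i,g}$ are fair Bernoulli variables. Across experts these are independent given the $\pi$ bits, in the i.i.d.\ version; in a deterministic-$\pi$ version they are pairwise independent. Then
\[
\E\Bigl[\max_i\sum_g(\tfrac12-Y_{i,g})\Bigr]\ge\Omega\bigl(\min\{T,\sqrt{T\log k}\}\bigr),
\]
which is the standard anti-concentration bound for the maximum of $k'$ independent sums of $\pm\tfrac12$ variables, proved via a Gaussian-type lower tail bound on binomials. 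When $\log k\gtrsim T$, the bound saturates at $\Omega(T)$. Choosing $\pi$ and then a version sequence achieving at least the expectation gives a fixed input and a fixed deterministic family, so $\OPT=\Theta(h)$ yields \eqref{eq:experts-lower}. Finally, we must fix the $\pi_i$ deterministically before the adversary. We do this by averaging over $\pi$ as well and choosing a good $\pi$, using the fact that the guarantee must hold for every $\ALG$. The cleaner route is to make $\pi$ deterministic from the start by using the columns of a Hadamard-like matrix together with the known lower bound for that construction.

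The main obstacle is the order of quantifiers: the family $\mathcal{E}_k$ must be fixed independently of $\ALG$, while the input may depend on $\ALG$. The averaging argument handles this, since for a fixed $\pi$ the bound holds in expectation over versions for every $\ALG$. It remains to show that some fixed $\pi$ makes $\E_{\text{versions}}[\max_i]$ large. We intend to do this by averaging over $\pi$ and then using the fact that $\ALG$'s expected cost is independent of $\pi$. The other delicate point is exact control of $\OPT$, with rounding so that $h\le\OPT\le3h/2$ holds for every version sequence. For this we take $T=h$ gadgets with per-gadget optimum in $[1,3/2]$, which holds with $\delta=1/2$.
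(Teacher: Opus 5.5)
\textbf{There is a genuine gap: the step that should give $\Omega(\min\{h,\sqrt{h\log k}\})$ for one \emph{fixed} family of experts is not established.} The rest matches the paper: the gadget, the fact that every behaviour of $\ALG$ in a gadget has expected cost at least $3/2$ against a uniform version, the separation of gadgets, and $h\le\OPT\le3h/2$.

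\textbf{The independence claim is false.} Given the bits $\pi$, the only randomness is the version sequence $V$, and your excess-loss indicator is $Y_{i,g}=\ind\{\pi_i(g)\ne c(V_g)\}$, where $c(V_g)$ is the correct action. So two experts' indicators in a gadget are either equal or complementary. Conditional on $\pi$ they are therefore neither independent nor pairwise independent. For Hadamard rows the totals are merely uncorrelated, which says nothing about the maximum. Independence holds only jointly over $(\pi,V)$. Hence the anti-concentration bound for maxima of independent walks controls $\E_\pi[F_h(\pi)]$, where $F_h(\pi)\coloneqq\E_V[\max_i S_{i,h}]$, and not $F_h(\pi)$ for a given $\pi$.

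\textbf{Averaging fixes the quantifier on $\ALG$ but not on $h$.} Since $\ALG$'s expected cost does not depend on $\pi$, averaging over $\pi$ correctly removes the dependence on $\ALG$. But the statement fixes $\mathcal{E}_k$ before $h$ as well; the paper stresses that its family is independent of $h$. Averaging only gives, for each $h$ separately, some good $\pi$. The bad sets for different $h$ could a priori cover every $\pi$. Your Hadamard alternative defers exactly this missing content to an unspecified ``known lower bound''.

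\textbf{How your route could be repaired.} Choose $\pi$ independently on dyadic blocks of gadgets, each good for its own block. Then use two facts: $F_h$ is nondecreasing in $h$, and $\E_V[\max_i(A_i+B_i)]\ge\E_V[\max_iB_i]$ when the prefix $A$ is independent of $B$ with mean-zero coordinates. None of this is in your plan.

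\textbf{The paper avoids all of this with a product family.} Set $g=\lfloor\log_2k\rfloor$ and assign interval $j$ to group $j\bmod g$. The $2^g\le k$ experts are all assignments of the two policies (immediate, or wait until $t_j+1/2$) to the groups. Because every combination is present, the best expert optimises each group separately. This gives, for every realization, the exact identity $\min_AC_A=\frac{3h}{2}+\frac14\sum_i(X_i-|X_i|)$, where $X_i$ is the $\pm1$ walk of group $i$. So the expected regret is at least $\frac14\sum_i\E|X_i|$. H\"older with $\E[X_i^2]=h_i$ and $\E[X_i^4]\le3h_i^2$ gives $\E|X_i|\ge\sqrt{h_i/3}$, and $\sum_i\sqrt{h_i}\ge\min\{h,\sqrt{hg}\}/\sqrt2$ finishes.

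This family is deterministic and independent of both $h$ and $\ALG$, and no tail bound for the maximum of many dependent walks is needed. Your passing mention of ``$\lfloor\log_2k\rfloor$-bit patterns'' points in this direction. Committing to this round-robin product structure, instead of random or Hadamard columns, is what closes the gap.
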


\begin{proof}
Set $g=\lfloor\log_2 k\rfloor$ and let $t_j=3(j-1)$ for $j\ge1$.
Draw independent Bernoulli random variables $Y_1,\ldots,Y_h$ with parameter $1/2$.
For each $j\in[h]$, release packets at $t_j$ and $t_j+1/2$ if $Y_j=1$, and only at $t_j$ if $Y_j=0$.

Assign each interval $[t_j,t_j+3)$ to group $i\in[g]$ with $j\equiv i\pmod g$.
Each expert fixes one of two policies for each group and applies it in every interval of that group.
The two policies acknowledge each packet immediately or acknowledge all pending packets at $t_j+1/2$ in interval $[t_j,t_j+3)$.
Include all $2^g$ combinations to obtain a family $\mathcal{E}_k$ of $2^g\le k$ experts.
This family is independent of $h$, its experts are online on every input, and choosing immediate acknowledgment in every group gives $A_{\mathrm{R}}(0)$.

By Yao's principle, it suffices to prove the regret lower bound in expectation over this input distribution for every deterministic online algorithm.
Fix such an algorithm $\ALG$.
Its cost within $[t_j,t_j+3)$ is at least $1+Y_j$ if it acknowledges the first packet strictly before $t_j+1/2$, and at least $3/2$ otherwise.
These bounds also hold if a packet remains unacknowledged until $t_j+3$ or later, since it arrives no later than $t_j+1/2$ and thus incurs at least $5/2$ waiting cost within the interval.
Summing over disjoint intervals therefore gives $\E_Y[C_{\ALG}]\ge3h/2$.

An offline optimum never serves packets from different intervals with the same acknowledgment.
Inserting an acknowledgment at the last arrival of the earliest of these intervals saves at least $5/2$ waiting cost at a cost of one.
It therefore pays one for a single-packet interval and $3/2$ for a two-packet interval.
Hence, $h\le\OPT\le3h/2$.

For $i\in[g]$, let $h_i$ count the intervals in group $i$ in which packets arrive, and let $X_i$ be the sum of $2Y_j-1$ over these intervals.
Thus, $X_i$ is the number of two-packet intervals minus the number of single-packet intervals in group $i$.
These two types of intervals number $(h_i+X_i)/2$ and $(h_i-X_i)/2$, respectively.
Immediate acknowledgment costs two on each two-packet interval and one on each single-packet interval, giving a total cost of
$2\cdot\frac{h_i+X_i}{2}+\frac{h_i-X_i}{2} =\frac{3h_i}{2}+\frac{X_i}{2}$.
The policy that acknowledges all pending packets at $t_j+1/2$ in each interval $[t_j,t_j+3)$ incurs one acknowledgment and a waiting cost of $1/2$ for the first packet; the second packet, if present, is acknowledged upon arrival and incurs no waiting cost.
Its total cost on group $i$ is therefore $(1+1/2)h_i=3h_i/2$.
Since the experts include every combination of the two policies across groups, the best expert can choose the cheaper policy separately for each group.
Using $\sum_{i=1}^g h_i=h$ and $\min\{x,0\}=(x-|x|)/2$, we obtain, for every realization of the input,
\begin{align*}
 \min_{A\in\mathcal{E}_k}C_A
 &=\sum_{i=1}^g\min\left\{\frac{3h_i}{2}+\frac{X_i}{2},\frac{3h_i}{2}\right\}
 =\frac{3h}{2}+\frac12\sum_{i=1}^g\min\{X_i,0\}
 =\frac{3h}{2}+\frac14\sum_{i=1}^g\bigl(X_i-|X_i|\bigr).
\end{align*}
Since $\E_Y[X_i]=0$, taking expectations gives
\begin{align*}
 \E_Y\left[\min_{A\in\mathcal{E}_k}C_A\right]
 &=\frac{3h}{2}-\frac14\sum_{i=1}^g\E_Y[|X_i|].
\end{align*}

To bound $\E_Y[|X_i|]$ from below, fix a group with $h_i>0$ and write $X_i=\sum_{r=1}^{h_i}Z_r$, where the $Z_r$ are independent and take values $1$ and $-1$ with equal probability.
Thus, $\E_Y[Z_r]=0$ and $Z_r^2=1$, giving
\begin{align*}
 \E_Y[X_i^2]
 &=\sum_{r=1}^{h_i}\E_Y[Z_r^2]
   +2\sum_{1\le r<s\le h_i}\E_Y[Z_rZ_s]
 =h_i+2\sum_{1\le r<s\le h_i}\E_Y[Z_r]\E_Y[Z_s]
 =h_i.
\end{align*}
In the expansion of $X_i^4$, independence and symmetry imply that any term in which some $Z_r$ has an odd exponent has expectation zero.
The surviving terms have either one index occurring four times or two distinct indices occurring twice each; for each pair of distinct indices, there are $\binom{4}{2}=6$ arrangements.
Consequently,
\begin{align*}
 \E_Y[X_i^4]
 &=\sum_{r=1}^{h_i}\E_Y[Z_r^4]
   +6\sum_{1\le r<s\le h_i}\E_Y[Z_r^2Z_s^2]
 =h_i+6\binom{h_i}{2}
 =3h_i^2-2h_i.
\end{align*}
Next, apply H\"older's inequality with conjugate exponents $3/2$ and $3$ to $|X_i|^{2/3}$ and $|X_i|^{4/3}$:
\begin{align*}
 \E_Y[X_i^2]
 &=\E_Y\bigl[|X_i|^{2/3}|X_i|^{4/3}\bigr]
 \le\left(\E_Y\bigl[(|X_i|^{2/3})^{3/2}\bigr]\right)^{2/3}
       \left(\E_Y\bigl[(|X_i|^{4/3})^3\bigr]\right)^{1/3}\\
 &=\bigl(\E_Y[|X_i|]\bigr)^{2/3}
   \bigl(\E_Y[X_i^4]\bigr)^{1/3}.
\end{align*}
Raising both sides to the power $3/2$ and rearranging, then substituting the moment formulas above, yields
\begin{align*}
 \E_Y[|X_i|]
 &\ge\frac{(\E_Y[X_i^2])^{3/2}}{(\E_Y[X_i^4])^{1/2}}
 =\frac{h_i^{3/2}}{\sqrt{3h_i^2-2h_i}}
 \ge\frac{h_i^{3/2}}{\sqrt{3h_i^2}}
 =\sqrt{h_i/3}.
\end{align*}
This bound also holds when $h_i=0$, since then $X_i=0$.
If $h\ge g$, then $h_i\ge h/(2g)$ for every $i\in[g]$, and otherwise exactly $h$ of the $h_i$ equal one and the rest are zero.
Hence, $\sum_{i=1}^g\sqrt{h_i}\ge\min\{h,\sqrt{hg}\}/\sqrt2$.
Since $g=\Theta(\log k)$, these estimates give expected regret $\Omega(\min\{h,\sqrt{h\log k}\})$ for every deterministic $\ALG$.
Yao's principle yields the same lower bound on some fixed input for every randomized algorithm.
The claim follows from $h\le\OPT\le3h/2$.
\end{proof}

Fix $k\ge2$, an integer $h\ge1$, and a randomized online algorithm $\ALG$ satisfying the realized $(2+\varepsilon)$ competitive guarantee.
Use the expert family $\mathcal{E}_k$ from Theorem~\ref{thm:sqrt-regret-lower}.
Since $A_{\mathrm{R}}(0)\in\mathcal{E}_k$ is optimal on $\sigma_h$, the best expert's cost on this input equals $\OPT(\sigma_h)=h$.
Thus, for $0<\varepsilon\le1/16$, Theorem~\ref{thm:exact-lower} gives regret $\Omega(\min\{h,\log(1/\varepsilon)\})$ against $\mathcal{E}_k$ on $\sigma_h$.
Theorem~\ref{thm:sqrt-regret-lower} gives regret $\Omega(\min\{h,\sqrt{h\log k}\})$ against the same family on another fixed input with $h\le\OPT\le3h/2$.
Choose whichever of these two inputs gives the larger lower bound.
Using
$\max\{\min(h,a),\min(h,b)\}\ge\tfrac12\min(h,a+b)$
with $a=\sqrt{h\log k}$ and $b=\log(1/\varepsilon)$ gives regret $\Omega(\min\{h,\sqrt{h\log k}+\log(1/\varepsilon)\})$.
For $1/16<\varepsilon<1$, the same bound follows from Theorem~\ref{thm:sqrt-regret-lower} alone, since $\log(1/\varepsilon)$ is bounded and $\sqrt{h\log k}$ is bounded below by a positive constant.
Since $h\le\OPT\le3h/2$ on either input, we conclude that, for every $\varepsilon\in(0,1)$, some fixed input in this range satisfies
\begin{align}
 \E[C_{\ALG}]-\min_{A\in\mathcal{E}_k}C_A
 &\ge\Omega\left(\min\left\{\OPT,\sqrt{\OPT\log k}+\log\frac1\varepsilon\right\}\right)
 \label{eq:joint-lower}
\end{align}
This lower bound matches the upper bound in Theorem~\ref{thm:simultaneous-guarantees} up to constant factors.

The same distribution rules out sublinear expected regret against the offline optimum.
For every randomized online algorithm and every $h\ge1$, we have $\E_Y[\OPT]=5h/4$ and $\E_Y\E[C_{\ALG}]\ge3h/2$.
Averaging yields a fixed input with $h\le\OPT\le3h/2$ and $\E[C_{\ALG}]-\OPT\ge h/4\ge\OPT/6$.



\section*{Acknowledgments}

This work was partially supported by the joint project of Kyoto University and Toyota Motor Corporation, titled ``Advanced Mathematical Science for Mobility Society'', 
JST ERATO Grant Number JPMJER2301,
JST ASPIRE Grant Number JPMJAP2302,
and
JSPS KAKENHI Grant Numbers 
JP21K17708, 
JP21H03397, 
JP25K00137, 
JP26K14718,
JP26K02867, and
JP26KJ1106. 

The authors developed the results and proofs with assistance from OpenAI ChatGPT/Codex and Anthropic Claude.
These tools also assisted with language editing and literature discovery.
The authors take full responsibility for the final manuscript.

\bibliography{references}

\appendix
\section{Omitted Proofs}
\label{app:preliminary-proofs}

\subsection{Proof of Lemma~\ref{lem:two-thresholds}}
\label{app:proof-two-thresholds}

Each completed acknowledgment interval of $A_{\mathrm{R}}(1)$ contains an acknowledgment of $A_{\mathrm{R}}(z)$, since its triggering packets would otherwise force $A_{\mathrm{R}}(z)$ to acknowledge by the endpoint.
This proves $N_1(t)\le N_z(t)$.
The triggering packets in each completed interval of $A_{\mathrm{R}}(z)$ contribute at least $z$ waiting cost, proving the first displayed inequality.

For the second, compare with any schedule having $m$ acknowledgments and waiting cost $w$ through $t$.
Waiting paid by $A_{\mathrm{R}}(z)$ after this schedule has served a packet can be charged to that schedule's acknowledgment.
If $q\ge1$ packets remain pending under $A_{\mathrm{R}}(z)$ at that acknowledgment, they are served within $z/q$ further time, giving a charge of at most $z$.

On the other hand, at least $\max\{N_1(t)-m,0\}$ completed intervals of $A_{\mathrm{R}}(1)$ contain no acknowledgment of the schedule.
Fix such an interval ending at $b$, and let $a$ be the latest arrival time among its $j$ triggering packets.
The schedule leaves all $j$ packets pending throughout $[a,b]$, paying $j(b-a)\ge1$ during $[a,b]$.
If $q>0$ of them remain pending under $A_{\mathrm{R}}(z)$ at $a$, they are served by $a+z/q$, contributing at most $z$ waiting cost during $[a,b]$.
Thus, on these packets during $[a,b]$, the schedule pays at least $1-z$ more waiting cost than $A_{\mathrm{R}}(z)$.
The triggering packet sets from different intervals are disjoint.
These savings occur while the schedule still has the packets pending and do not overlap with the charges after its acknowledgments.
Consequently,
\begin{align*}
 C_{A_{\mathrm{R}}(z)}(t)-N_z(t)-w
 &\le zm-(1-z)\max\{N_1(t)-m,0\}.
\end{align*}
Rearranging yields $m+w\ge C_{A_{\mathrm{R}}(z)}(t)-N_z(t)+(1-z)N_1(t)$.
Minimizing over schedules proves the claim.

\subsection{Proof of Lemma~\ref{lem:random-comparison}}
\label{app:proof-random-comparison}

Lemma~\ref{lem:two-thresholds} gives $\OPT(t)\ge uN_u(t)+(1-u)N_1(t)$ and $C_{A_{\mathrm{R}}(u)}(t)\le\OPT(t)+N_u(t)-(1-u)N_1(t)$.
These give $N_u(t)\le\OPT(t)/u$ and
\begin{align*}
 2\OPT(t)-C_{A_{\mathrm{R}}(u)}(t)
 &\ge\OPT(t)-N_u(t)+(1-u)N_1(t)
 \ge(1-u)\bigl(2N_1(t)-N_u(t)\bigr).
\end{align*}
Combining this with $C_{A_{\mathrm{R}}(u)}(t)\le\OPT(t)+N_u(t)$ proves the deterministic bounds.
For the expectation, average the cost bounds for $u=1$ and $u=x$ to obtain
\begin{align*}
 \E[C_{A_{\mathrm{R}}(Z)}(t)]&\le\OPT(t)+\frac12\bigl(N_x(t)-N_1(t)\bigr)+\frac{1+x}{2}N_1(t)
\le\left(1+\max\left\{\frac1{2x},\frac{1+x}{2}\right\}\right)\OPT(t),
\end{align*}
where the last step uses $N_x(t)\ge N_1(t)$ and $x(N_x(t)-N_1(t))+N_1(t)\le\OPT(t)$.

\subsection{Proof of Lemma~\ref{lem:expected-switching-learner}}
\label{app:proof-expected-switching-learner}

Let $\nu_{\mathrm{L}}$ count redraws in the first $h$ rounds, including those that select the same expert.
Set $L=\sum_{s=1}^h\ell_s(I_s)$, $W_s=\sum_{i=1}^k w_{i,s}$, and $p_{i,s}=w_{i,s}/W_s$.
The synchronization rule gives $Y\le L+\nu_{\mathrm{L}}$.

We first show by induction that $\Pr(I_s=i)=p_{i,s}$.
This holds for $s=1$.
The retention and redraw rules give
\begin{align*}
 \Pr(I_{s+1}=i)
 &=p_{i,s}(1-\eta)^{\ell_s(i)/2}
 +\left(1-\frac{W_{s+1}}{W_s}\right)p_{i,s+1}
 =p_{i,s+1},
\end{align*}
completing the induction.

Let $\bar\ell_s=\sum_{i=1}^k p_{i,s}\ell_s(i)$ and $\bar L=\sum_{s=1}^h\bar\ell_s=\E[L]$.
Since $\ell_s(i)/2\in[0,1]$, convexity gives
\begin{align*}
 (1-\eta)^{\ell_s(i)/2}&\le1-\frac{\eta}{2}\ell_s(i).
\end{align*}
Consequently, $W_{s+1}/W_s\le1-\eta\bar\ell_s/2$.
For every $i\in[k]$,
\begin{align*}
 \frac1k(1-\eta)^{L_i/2}
 &=w_{i,h+1}\le W_{h+1}
 \le\prod_{s=1}^h\left(1-\frac{\eta}{2}\bar\ell_s\right)
 \le e^{-\eta\bar L/2}.
\end{align*}
Taking logarithms and using $-\ln(1-\eta)\le\eta+\eta^2$ yields
\begin{align*}
 \bar L&\le(1+\eta)L_i+\frac{2\ln k}{\eta}.
\end{align*}

Conditional on $I_s=i$, the redraw probability is
\begin{align*}
 1-(1-\eta)^{\ell_s(i)/2}
 &\le-\frac{\ell_s(i)}2\ln(1-\eta)
 \le\frac{\eta+\eta^2}{2}\ell_s(i)
 \le\eta\ell_s(i).
\end{align*}
Averaging and summing gives $\E[\nu_{\mathrm{L}}]\le\eta\bar L$.
Therefore,
\begin{align*}
 \E[Y]-L_i
 &\le(1+\eta)\bar L-L_i
 \le(2\eta+\eta^2)L_i+\frac{2(1+\eta)\ln k}{\eta}
 \le4\eta L_i+\frac{4\ln k}{\eta},
\end{align*}
where the last inequality uses $\eta\le1/2$.

\end{document}